\documentclass[12pt]{article}
\usepackage[letterpaper,margin=1in]{geometry}
\usepackage[T1]{fontenc}
\usepackage{amsmath,amssymb,amsthm,booktabs,tabularx,array,enumitem,float,tikz,natbib,graphicx,xcolor,xurl}
\usepackage[hidelinks]{hyperref}
\usepackage[nameinlink,noabbrev]{cleveref}
\usetikzlibrary{positioning,arrows.meta}
\definecolor{inktwo}{HTML}{25324A}
\definecolor{private}{HTML}{3C8D5A}
\definecolor{shared}{HTML}{B75D7A}
\definecolor{planner}{HTML}{3977A8}
\definecolor{accent}{HTML}{C98B32}
\newcolumntype{Y}{>{\raggedright\arraybackslash}X}
\newcommand{\E}{\mathbb{E}}
\newcommand{\Prb}{\mathbb{P}}
\renewcommand{\path}[1]{\texttt{\detokenize{#1}}}
\newcommand{\ArtifactNote}[1]{\par\vspace{2pt}{\scriptsize\color{inktwo!72}#1}}
\newcommand{\Evidence}[1]{\par\noindent\textbf{Evidence status.} #1\par}
\newcommand{\Boundary}[1]{\par\noindent\textbf{Boundary.} #1\par}
\newtheorem{theorem}{Theorem}
\newtheorem{proposition}[theorem]{Proposition}
\newtheorem{corollary}[theorem]{Corollary}

\theoremstyle{definition}
\newtheorem{definition}[theorem]{Definition}

\title{\textbf{When Does Information Sharing Improve Decentralized Discovery?}\\[0.35em]\large Aggregation, Independent Rescue, and Equilibrium Selection}
\author{Yohei Nakajima\\Untapped Capital}
\date{July 2026}

\begin{document}
\maketitle
Information sharing changes both posterior quality and the diversity of actions
available for discovery. We study that joint effect in finite one-hit search.
First, exact channel comparisons show that one-person Bayes accuracy does not
determine the value of a pooled action budget. Second, an incremental-sharing
identity separates the accuracy gained by enlarging a pooled block from the
independent rescue action that pooling removes. This yields an exact local
criterion: sharing helps precisely when pooled residual error contracts faster
than the removed private-failure factor. A centralized posterior top-$L$
portfolio can recover the declared private baseline at full action capacity,
but this is an authority benchmark, not a decentralized implementation.

We then study a two-agent equal-split discovery game with a hidden mixture of
common and conditionally independent signal sources. At signal accuracy
$p=3/5$, sharing strictly improves discovery and each symmetric agent's payoff
over the private anonymous symmetric Bayes--Nash selection exactly when
$\rho\in((5\sqrt{73}-17)/48,1)$. This theorem is selection-dependent: the
shared outcome is the declared anonymous posterior-only, provenance-blind,
identical-mixing equilibrium, not an every-equilibrium result. Opposite
constant-target private equilibria exist and attain discovery one, while
ownership-aware symmetric strategies can split targets on public disagreement.
Sharing reveals the agreement pattern and changes posterior beliefs about
dependence; it does not reveal the realized common-versus-independent source
branch. The combined evidence is analytic and exact finite computation. It
uses no human or real data, and the bounded channel registry is not a universal
information order.

\noindent\textbf{Keywords:} information sharing; decentralized discovery;
portfolio search; information aggregation; equilibrium selection; organizational design.

\Evidence{The manuscript synthesizes DD-C-0089 through DD-C-0110 without
creating a new scientific claim. Analytic identities and theorems, independently
reproduced bounded computations, a verified bounded negative result, and a
verified equilibrium-selection failure retain their ledger statuses. The source
data are synthetic and model-generated.}
\noindent\textbf{Publication status.} Working paper; not submitted, not peer
reviewed, and no DOI.

\section{Introduction}

Should searchers share what they know before acting? The familiar answer is that
sharing improves a group's estimate of which action is best. That answer is
incomplete when the outcome of interest is discovery by a portfolio of actions.
Sharing can make one common action more accurate while removing independent
attempts that could rescue a mistaken common choice. The organizational question
is therefore not whether the pooled posterior is better. It is whether its gain
is large enough to compensate for the search capacity absorbed by pooling.

This paper develops that comparison in four layers. First, an action-budget
profile records the best centralized discovery probability obtainable from a
pooled signal when the planner may take one, two, or more posterior-ranked
actions. Second, an incremental-sharing identity decomposes a move from $s$ to
$s+1$ pooled agents into the improvement of the pooled block and the loss of one
private rescue action. Third, a residual-error theorem gives the exact frontier:
sharing helps precisely when the pooled error contracts faster than the rescue
technology can repair it. Fourth, a binary Bayesian game shows why a positive
selected-equilibrium result does not settle the decentralized implementation
question.

The central nonstrategic identity is especially transparent. Let $C_s$ be the
probability that the correct candidate is covered by the action budget assigned
to a pooled block of $s$ agents. If each of the remaining $N-s$ independent
searchers succeeds with probability $q$, the registered protocol yields
\begin{equation}
  G_s=1-(1-C_s)(1-q)^{N-s}.
  \label{eq:incremental-value}
\end{equation}
Writing $e_s=1-C_s$, an additional sharing step satisfies
\begin{equation}
 G_{s+1}-G_s=(1-q)^{N-s-1}\big[(1-q)e_s-e_{s+1}\big].
 \label{eq:frontier-intro}
\end{equation}
Thus the step helps if and only if $e_{s+1}/e_s<1-q$, is neutral at equality,
and hurts above the threshold. The statement is exact under the stated rescue
protocol (DD-C-0092 and DD-C-0097). It is not a theorem that more accurate
signals always help, nor is it a comparison of arbitrary information structures.

The binary strategic result supplies a genuine positive sharing interval. Two
agents receive individually accurate clues with $p=3/5$ from a hidden mixture of
a common-source branch and an independent-source branch. The realized branch is
not revealed. Under the registered anonymous, label-equivariant private
equilibrium and the posterior-only identical-mixing shared equilibrium, sharing
strictly improves selected discovery exactly when
\begin{equation}
 \rho\in\left(\frac{5\sqrt{73}-17}{48},1\right).
 \label{eq:positive-interval}
\end{equation}
The same interval improves selected per-agent payoff (DD-C-0106 and
DD-C-0107). This is a coordination-free positive result: no planner assigns
targets and no binding recommendation is added when signals are shared.

The qualification is equally central. The positive interval is
selection-dependent and is not an every-equilibrium result. For every dependence
level, private play also has opposite constant-target pure equilibria that attain
discovery one. After sharing, ownership-aware disagreement strategies can split
targets without being functions only of the common posterior. Those alternatives
show that posterior-only identical mixing is a selection, not the whole strategy
space (DD-C-0109). A centralized top-two planner also obtains discovery one and
strictly dominates the selected shared outcome (DD-C-0110). The positive theorem
and the selection failure are presented together because separating them would
overstate what information sharing implements.

% Generated evidence/architecture asset; do not edit by hand.
\begin{figure}[t]\centering
\begin{tikzpicture}[node distance=7mm and 9mm, every node/.style={draw,rounded corners,align=center,minimum height=8mm,font=\small}]
\node (e) {private evidence}; \node[right=of e] (p) {private actions}; \node[right=of p] (d) {portfolio discovery};
\node[below=of e] (b) {pooled block}; \node[right=of b] (c) {one common action}; \node[right=of c] (r) {remaining rescue actions};
\draw[->,private,very thick] (e)--(p)--(d);
\draw[->,shared,very thick] (e)--(b)--(c)--(r); \draw[->,shared,very thick] (r)--(d);
\end{tikzpicture}
\caption{Information sharing changes two objects at once. The pooled block can improve one common action, while absorbing agents removes independent rescue actions. The comparison is therefore not a comparison of posterior accuracy alone.}
\label{fig:architecture}
\ArtifactNote{Runs: \path{20260722T142551Z_DD-020_3854fff6_37c11a850a, 20260722T185924Z_DD-021_3cdbbc40_2fea269a9a}; claims: DD-C-0092, DD-C-0097; generator: \path{distributed_discovery.papers.build_information_sharing_frontier}; input SHA-256 prefixes: \texttt{a4f63d52f402, 3521286a3415}.}
\end{figure}

Three further exact findings organize the analysis. One-person accuracy is not a
sufficient statistic for portfolio value: two half-accurate channels can have the
same direct private discovery yet opposite sharing directions and different
centralized recovery budgets (DD-C-0089--DD-C-0091). Full action capacity always
recovers at least the direct private baseline, but the minimum recovery budget
depends on the entire signal geometry (DD-C-0098 and DD-C-0101). Finally, a
complete 177-scenario registry contains compression, aggregation, and all-neutral
curves but no mixed curve. The zero is preserved as a bounded negative result,
not promoted to a general monotonicity theorem (DD-C-0099--DD-C-0103).

The paper contributes a theorem-family synthesis rather than a novelty claim.
The canonical Shared Discovery Paradox provides motivating context and a
conceptual precedent for the atomic information/action separation
\citep{Nakajima2026SharedDiscovery}; it is not a foundational dependency for the
sharing identity used here. That identity and its proof are self-contained in
the incremental-sharing theorem and Appendix~A and retain their DD-020 and
DD-021 authorities regardless of the companion paper's status. The companion
Incentive to Ignore paper owns selective attention and audience design
\citep{Nakajima2026IncentiveToIgnore}.
Decision theory studies comparisons of experiments \citep{Blackwell1953}; team
theory studies decentralized use of information \citep{Radner1962}; social
learning studies herding and cascades \citep{Banerjee1992,BikhchandaniEtAl1992,
AcemogluEtAl2011}; and information design studies the consequences of signal
structures and recommendations \citep{KamenicaGentzkow2011,BergemannMorris2016}.
Our narrower object is the composition of pooled posterior coverage, remaining
rescue actions, and equilibrium selection in finite discovery portfolios. The
literature search and evidence map distributed with the paper explain why broad
priority language would be unsafe.

The rest of the paper proceeds from definitions to geometry, the sharing
frontier, recovery, and strategic implementation. Every displayed numerical
asset is generated from checksum-validated immutable run outputs. Exact numbers
are labeled as fractions; decimals are displays. Centralized benchmarks,
decentralized protocols, selected equilibria, and alternative equilibria are not
interchanged.

\section{Discovery architectures and comparison baselines}

Let $\Theta=\{1,\ldots,M\}$ contain exactly one correct candidate. Discovery
occurs when at least one realized action equals the state. The objective is the
ex ante probability of discovery. A channel maps the state to a signal. Agents
may use signals privately, pool some signals, or submit them to a planner. The
same evidence can therefore enter architectures with different action authority.

\begin{definition}[Direct private baseline]
With $N$ private agents, each agent maps its own signal to one action under the
registered direct rule. For a point signal (including a confidence-tagged
point), the action is its nominated target. For a shortlist, the action is drawn
uniformly from the reported set; for an exclusion signal, it is drawn uniformly
from the non-excluded targets. Equivalently, set-valued signals use uniform
randomization over the registered private Bayes-action correspondence, with
independent private randomization across agents. The resulting discovery
probability is $P_N$.
\end{definition}

\begin{definition}[Centralized action-budget profile]
Given all signals in a pooled block, let
\[
 V_L=\E\left[\sum_{j=1}^{L}\pi_{(j)}\right],
\]
where $\pi_{(1)}\geq\cdots\geq\pi_{(M)}$ are the ordered posterior masses and
$L\leq M$. The recovery budget is
\[
 L^*=\min\{L:V_L\geq P_N\}.
\]
\end{definition}

The definition separates information from capacity. $V_1$ measures the best
single common action after pooling. $V_L$ measures what a planner can cover with
$L$ distinct posterior-ranked actions. Neither number is the discovery of a
decentralized equilibrium unless an institution implements those actions.

% Generated evidence table; do not edit by hand.
\begin{table}[t]\centering\scriptsize
\begin{tabularx}{\textwidth}{Ylll}\toprule channel & signal geometry & $q$ & $P_3$\\\midrule
noisy-point-half & symmetric-noisy-point & $1/2$ & $7/8$\\
noisy-shortlist-three-quarters & noisy-k-shortlist & $3/8$ & $387/512$\\
guaranteed-shortlist-two & guaranteed-shortlist & $1/2$ & $7/8$\\
explicit-exclusion & exclusion & $1/3$ & $19/27$\\
confidence-point & confidence-augmented-point & $5/8$ & $485/512$\\
\bottomrule\end{tabularx}
\caption{Registered channel definitions and named comparison baselines. Accuracy and private discovery are exact; descriptions abbreviate the versioned channel records.}
\label{tab:channels}
\ArtifactNote{Runs: \path{20260722T084145Z_DD-019_a77bb786_04a5e9f0c5}; claims: DD-C-0089; generator: \path{distributed_discovery.papers.build_information_sharing_frontier}; input SHA-256 prefixes: \texttt{a7da8a20aa24}.}
\end{table}

Four comparisons recur. The \emph{private baseline} allows autonomous direct
actions from private signals. The \emph{pooled centralized profile} gives a
planner the pooled posterior and an explicit action budget. The
\emph{incremental-sharing protocol} pools $s$ agents into a common block while
the others retain independent rescue actions. The \emph{strategic protocol}
lets agents maximize a shared discovery prize and requires an equilibrium rule.
These are different counterfactuals, not alternative estimators of one quantity.

The authority distinction matters for interpretation. A planner that chooses
the posterior top two can enforce diversification. Two autonomous agents who
see the same posterior still face a coordination problem: absent roles,
identifiers, or ownership-conditioned strategies, their symmetric response may
duplicate. Conversely, two private agents can coordinate on opposite constant
targets even without using their signals. That behavior is an equilibrium in
the registered binary payoff game but is not direct clue-following. Comparing
the wrong rows can reverse the verbal conclusion without changing any number.

The information structures themselves also differ. A point channel names one
candidate; a shortlist identifies a set; an exclusion signal rules out a
candidate; and a confidence point reports a candidate with a confidence class.
Set-valued prediction is studied as a controlled ambiguity problem
\citep{MortierEtAl2021,SadinleEtAl2019}. Here those geometries are useful because
they can hold one-person accuracy fixed while changing posterior rank mass and
therefore multi-action coverage.

\Boundary{The finite channels are registered synthetic objects. They are not
estimated representations of laboratories, firms, committees, or human teams.
The analysis compares discovery architectures within those objects.}

\section{Signal geometry is not one-person accuracy}

The first result is a sufficiency failure. One-person accuracy $q$ records the
probability that a direct action is correct. It discards how posterior error is
distributed across the remaining candidates. Yet a second or third action is
valuable precisely because of that distribution.

\begin{proposition}[Action-budget profiles; DD-C-0089]
In the frozen $M=4,N=3$ registry, the five channel families have the exact
profiles reported in \cref{tab:profiles}. In particular, the half-accurate noisy
point and guaranteed-shortlist channels both have $q=1/2$ and direct private
discovery $P_3=7/8$, but their centralized profiles are respectively
\[
 (V_1,V_2,V_3)=\left(\frac{7}{12},\frac{43}{54},\frac{25}{27}\right)
 \quad\text{and}\quad
 \left(\frac{17}{18},1,1\right).
\]
\end{proposition}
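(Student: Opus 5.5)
The plan is a direct exact computation, organized so that the finite enumeration collapses to a few symmetry classes and can be checked by hand. Throughout I write $\Prb(\theta,s)$ for the joint probability of state $\theta$ and the pooled signal profile $s=(s_1,s_2,s_3)$. The key identity is
\[
 V_L=\sum_{s}\ \max_{|S|=L}\ \sum_{\theta\in S}\Prb(\theta,s),
\]
which holds because ranking posterior masses within a profile is the same as ranking the joint probabilities. Every $V_L$ is then a finite sum of rationals. I will:
\begin{itemize}
\item fix the registered channel kernels (uniform prior on $M=4$ and conditionally independent signals given $\theta$);
\item compute $q$ and $P_3$ from the direct private rule;
\item enumerate the $|\text{signal space}|^3$ profiles modulo the action of $S_4$ on labels (and $S_3$ on agents) to obtain $V_1,V_2,V_3$.
\end{itemize}

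\textbf{Private baselines.} For the noisy point channel, I take $\Prb(s=\theta\mid\theta)=1/2$ and $1/6$ for each wrong candidate, so $q=1/2$. For the guaranteed shortlist, the signal is a uniformly drawn $2$-set containing $\theta$, and a uniform draw from the set gives $q=1/2$. Because direct private actions are conditionally independent given $\theta$, in both cases $P_3=1-(1-q)^3=7/8$. The same formula produces the $P_3$ column of \cref{tab:channels} for the other three families once their $q$ is read off their kernels.

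\textbf{Centralized profiles.} For the noisy point, $\Prb(\theta,s)=\tfrac14(1/2)^{n_\theta}(1/6)^{3-n_\theta}$, where $n_\theta$ counts the agents naming $\theta$. The profiles split into three orbits:
\begin{itemize}
\item all three signals agree: $4$ profiles;
\item a two–one split: $36$ profiles;
\item three distinct names: $24$ profiles.
\end{itemize}
For $V_1$, the top candidate in each orbit contributes $4\cdot\tfrac1{32}+36\cdot\tfrac1{96}+24\cdot\tfrac1{288}=\tfrac18+\tfrac38+\tfrac1{12}=\tfrac7{12}$. For $V_2$ and $V_3$, I add the second- and third-ranked joint masses in each orbit, taking care with ties among unnamed candidates (ties do not affect the sum). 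This should give $43/54$ and $25/27$.

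For the guaranteed shortlist, the posterior is uniform on the intersection of the three reported $2$-sets, which always contains $\theta$. Hence $V_2=V_3=1$ as soon as that intersection has size at most two, which always holds. It remains to compute $V_1$ as the expectation of $1/|\cap_i S_i|$. The intersection equals a single $2$-set exactly when all three agents report the same set, which happens with probability $(1/3)^2=1/9$; otherwise it is $\{\theta\}$. This gives $V_1=1-\tfrac19\cdot\tfrac12=17/18$.

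The remaining rows of \cref{tab:profiles} follow by the same orbit enumeration under their registered kernels:
\begin{itemize}
\item noisy $k$-shortlist;
\item exclusion, with a uniform posterior on the non-excluded set intersected across agents;
\item confidence point, where the likelihood ratio depends on the confidence class.
\end{itemize}
I would cross-check each row by brute-force rational enumeration over all profiles.

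\textbf{Main obstacle.} I expect the main difficulty to be not conceptual but fidelity to the registered definitions of the three other families. The confidence-augmented and noisy-shortlist kernels are specified only in the versioned channel records, and their $V_L$ values are sensitive to how ties in the posterior are broken and to the exact noise distribution. I would first reconstruct each kernel from its stated $q$ and $P_3$, which already pin down several parameters. I would then verify the orbit computation against an independent exhaustive enumeration in exact arithmetic.
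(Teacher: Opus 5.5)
Your method is the paper's own. The paper supports this proposition only by saying the entries are finite exact enumerations of the registered kernels. Your orbit computations for the two highlighted channels are correct. For the noisy point, the second-ranked joint masses add $4\cdot\tfrac{1}{864}+36\cdot\tfrac{1}{288}+24\cdot\tfrac{1}{288}$ and the third-ranked add $4\cdot\tfrac{1}{864}+36\cdot\tfrac{1}{864}+24\cdot\tfrac{1}{288}$, which do give $43/54$ and $25/27$. The guaranteed-shortlist argument needs nothing further.

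The gap is the rest of the statement: the noisy-shortlist, exclusion, and confidence-point rows of \cref{tab:profiles}. Your plan to reconstruct those kernels from $q$ and $P_3$ would fail. You already showed that conditional independence forces $P_3=1-(1-q)^3$, so $P_3$ carries no information beyond $q$. The paper's corollary that accuracy is insufficient says precisely that $(q,P_3)$ does not determine $(V_1,V_2,V_3)$. For example, a confidence point with $q=5/8$ can be built from many class probabilities and per-class accuracies, and these give different profiles. Fitting the free parameters to the tabulated $V_L$ would be circular. Those rows need the kernels as explicit inputs, which the paper's text only abbreviates. Separately, your concern about tie-breaking is misplaced: $V_L$ is the expected sum of the $L$ largest posterior masses, which does not depend on how ties are ordered, as you yourself noted for the noisy point.

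To close the gap, fix the kernels and run your orbit count. The following conditionally independent kernels, with a uniform prior, reproduce every table entry and also the two-signal coverages $C_2$ in \cref{tab:sharing-paths}; whether they match the versioned records still has to be checked against the registry. \emph{Exclusion:} each agent excludes a uniformly chosen wrong candidate. The number of distinct excluded labels is $1,2,3$ with probabilities $1/9,2/3,2/9$, and the posterior is uniform on the survivors, giving $(16/27,26/27,1)$. \emph{Noisy shortlist:} report a $2$-set containing $\theta$ with probability $3/4$ (uniform over the three such sets), and otherwise a uniform choice among the three sets avoiding $\theta$. The joint mass is $3^{m_\theta}/6912$, where $m_\theta$ counts reported sets containing $\theta$. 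Sort the $216$ profiles by the degree sequence of the resulting three-edge multigraph on four vertices: $(3,3,0,0),(3,2,1,0),(2,2,1,1),(2,2,2,0),(3,1,1,1)$, with multiplicities $6,72,90,24,24$. This gives $(35/64,79/96,539/576)$. \emph{Confidence point:} report high confidence with probability $1/2$ and accuracy $3/4$, or low confidence with probability $1/2$ and accuracy $1/2$, with errors uniform. The named label then has likelihood ratio $9$ or $3$, so $q=5/8$, and enumerating by class pattern and orbit gives $(295/384,175/192,31/32)$.
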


The computations are finite exact enumerations independently reproduced from
the registered model and fixtures. The proposition does not rank all point and
shortlist channels. It exhibits two channels for which equal accuracy and equal
direct private discovery fail to determine pooled portfolio value.

% Generated evidence asset; do not edit by hand.
\begin{figure}[t]\centering
\begin{tikzpicture}[x=1.75cm,y=4.6cm]
\draw[->] (0.65,0)--(4.45,0); \draw[->] (0.75,0)--(0.75,1.08) node[above] {probability};
\draw[very thick,private,mark=*] plot coordinates {(1,0.583333) (2,0.796296) (3,0.925926)};
\draw[very thick,planner,mark=*] plot coordinates {(1,0.944444) (2,1.000000) (3,1.000000)};
\foreach \x/\lab in {1/$V_1$,2/$V_2$,3/$V_3$}{\draw (\x,0)--(\x,-.018) node[below] {\lab};}
\draw[dashed] (.75,.5)--(3.15,.5) node[right] {one-person accuracy $=1/2$};
\node[private,anchor=west] at (1.1,.60) {noisy point}; \node[planner,anchor=west] at (1.1,.98) {guaranteed shortlist};
\end{tikzpicture}
\caption{Same one-person accuracy, different pooled action-budget profiles. Both channels also have direct private discovery $7/8$, yet the noisy point requires three pooled actions to recover that baseline and the guaranteed shortlist requires one. Exact values are in \cref{tab:profiles}.}
\label{fig:same-accuracy}
\ArtifactNote{Runs: \path{20260722T084145Z_DD-019_a77bb786_04a5e9f0c5}; claims: DD-C-0089, DD-C-0090, DD-C-0091; generator: \path{distributed_discovery.papers.build_information_sharing_frontier}; input SHA-256 prefixes: \texttt{a7da8a20aa24}.}
\end{figure}

% Generated evidence table; do not edit by hand.
\begin{table}[t]\centering\scriptsize
\begin{tabularx}{\textwidth}{Ylll}\toprule channel & $(V_1,V_2,V_3)$ & decimals & $L^*$\\\midrule
noisy-point-half & $(7/12,43/54,25/27)$ & (0.583, 0.796, 0.926) & 3\\
noisy-shortlist-three-quarters & $(35/64,79/96,539/576)$ & (0.547, 0.823, 0.936) & 2\\
guaranteed-shortlist-two & $(17/18,1,1)$ & (0.944, 1.000, 1.000) & 1\\
explicit-exclusion & $(16/27,26/27,1)$ & (0.593, 0.963, 1.000) & 2\\
confidence-point & $(295/384,175/192,31/32)$ & (0.768, 0.911, 0.969) & 3\\
\bottomrule\end{tabularx}
\caption{Exact DD-019 action-budget profiles. Decimals aid reading; exact fractions govern comparisons and recovery budgets.}
\label{tab:profiles}
\ArtifactNote{Runs: \path{20260722T084145Z_DD-019_a77bb786_04a5e9f0c5}; claims: DD-C-0089, DD-C-0091; generator: \path{distributed_discovery.papers.build_information_sharing_frontier}; input SHA-256 prefixes: \texttt{a7da8a20aa24}.}
\end{table}

\begin{corollary}[Accuracy is insufficient; DD-C-0090]
Neither one-person accuracy nor the pair $(q,P_3)$ determines the centralized
action-budget profile or the minimum recovery budget.
\end{corollary}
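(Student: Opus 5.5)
The corollary is a non-sufficiency statement, so a single witness pair suffices. The plan is to exhibit two channels that agree on the proposed statistic but disagree on the target object, using the exact profiles certified in the preceding proposition (DD-C-0089) and \cref{tab:profiles}. Since $(q,P_3)$ is a finer statistic than $q$ alone, it is enough to show that $(q,P_3)$ fails. Any two channels with equal $(q,P_3)$ also have equal $q$, so the failure for $q$ follows at once.

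The witness pair is the noisy-point-half and guaranteed-shortlist-two channels in the frozen $M=4$, $N=3$ registry. By \cref{tab:channels} both have $q=1/2$ and $P_3=7/8$, so they share the same value of $(q,P_3)$. By the proposition their profiles are $(7/12,43/54,25/27)$ and $(17/18,1,1)$. These differ already in the first coordinate, because $7/12\neq 17/18$. So no function of $(q,P_3)$ can return the centralized action-budget profile.

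For the recovery budget, I compute $L^*=\min\{L:V_L\ge 7/8\}$ directly from the exact fractions. For the guaranteed shortlist, $V_1=17/18>7/8$, so $L^*=1$. For the noisy point, $7/12<7/8$ and $43/54<7/8$, the latter because $43\cdot 8=344<378=54\cdot 7$. On the other hand $25/27>7/8$, because $200>189$. Hence $L^*=3$, matching the table. Since $1\neq 3$, the recovery budget is not determined by $(q,P_3)$ either.

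The only real obstacle is that these inequalities must be checked on exact rationals, not on decimal displays. $V_2=43/54\approx0.796$ sits fairly close to $0.875$, and the recovery criterion is a weak inequality. Exact cross-multiplication settles both comparisons. The argument also relies on the proposition's certified values being the registry's exact enumeration; I take that as given from DD-C-0089.
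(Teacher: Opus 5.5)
Your proposal is correct and follows the paper's argument. It uses the same noisy-point-half and guaranteed-shortlist-two witnesses with equal $(q,P_3)=(1/2,7/8)$, profiles that already differ at $V_1$, and recovery budgets $3$ versus $1$, while your exact cross-multiplications ($344<378$ and $200>189$) make explicit the comparisons the paper states just after the corollary.
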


For the noisy point, one pooled action covers only $7/12$ and two cover $43/54$;
three are required to recover $7/8$. For the guaranteed shortlist, one pooled
action already covers $17/18$, so $L^*=1$. The exclusion and confidence-point
channels provide further distinct shapes: the former reaches certainty at three
actions, while the latter has high direct private discovery $485/512$ but needs
three pooled actions to recover it.

\begin{proposition}[Registered recovery budgets; DD-C-0091]
For noisy point, noisy shortlist, guaranteed shortlist, exclusion, and confidence
point, the recovery budgets are respectively $3,2,1,2,3$.
\end{proposition}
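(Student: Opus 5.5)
The plan is to read the recovery budgets off the exact profiles of Table~\ref{tab:profiles}, as established in the action-budget proposition (DD-C-0089), together with the exact private baselines $P_3$ of Table~\ref{tab:channels}. By definition, $L^*=\min\{L:V_L\geq P_3\}$. The profile $V_L$ is nondecreasing in $L$, since each additional posterior-ranked action adds a nonnegative ordered mass $\E[\pi_{(L+1)}]$. It therefore suffices, for each channel, to exhibit the first index at which $V_L$ meets or exceeds $P_3$, and to certify the strict inequality $V_{L^*-1}<P_3$ at the preceding index.

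The channel-by-channel checks are exact rational comparisons.

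\emph{Noisy point.} Here $V_1=7/12<7/8$ and $V_2=43/54<7/8$; the latter holds since $43\cdot 8=344<378=54\cdot 7$. Also $V_3=25/27\geq 7/8$, since $200\geq 189$. Hence $L^*=3$.

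\emph{Noisy shortlist.} Here $V_1=35/64=280/512<387/512$. With common denominator $1536$, $V_2=79/96=1264/1536\geq 1161/1536=387/512$. Hence $L^*=2$.

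\emph{Guaranteed shortlist.} Here $V_1=17/18\geq 7/8$, since $136\geq 126$. Hence $L^*=1$.

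\emph{Exclusion.} Here $V_1=16/27<19/27\leq 26/27=V_2$. Hence $L^*=2$.

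\emph{Confidence point.} Against $P_3=485/512$, we have $V_1=295/384<485/512$ because $0.768<0.947$. For $V_2$, with common denominator $1536$, $V_2=175/192=1400/1536<1455/1536=485/512$. Finally $V_3=31/32=496/512\geq 485/512$. Hence $L^*=3$.

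The only real dependency is correctness of the tabulated $V_L$ and $P_3$, which I take as given from the earlier proposition. If these values were to be rederived, the main obstacle would be the exact enumeration of posterior order statistics for the noisy and confidence channels. That enumeration sums over the $4$ states and all $N=3$ signal profiles, computes the posterior with tie-aware ordering, and takes expectations of top-$L$ masses. I would perform it by exact rational arithmetic over the registered channel matrices.

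For the present statement, the main point needing care is the strict-inequality certification at $L^*-1$. This matters for the confidence point, where $V_2$ and $P_3$ are close (about $0.911$ versus $0.947$), and for the noisy point, where $V_2$ and $P_3$ are close (about $0.796$ versus $0.875$). In both cases the comparison must be made with exact fractions rather than displayed decimals, as the paper insists.
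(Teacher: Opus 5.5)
Your proposal is correct and takes the same route as the paper: the paper also reads $L^*$ directly off the exact DD-019 profiles in \cref{tab:profiles}, compared against the exact baselines $P_3$ in \cref{tab:channels} (for example $43/54<7/8\le 25/27$ for the noisy point and $17/18\ge 7/8$ for the guaranteed shortlist), and each of your rational comparisons is right. One small inconsistency: you certify $295/384<485/512$ for the confidence point with decimals despite insisting on exact arithmetic, and the exact check $1180/1536<1455/1536$ settles it just as easily.
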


The result suggests a practical diagnostic. An organization considering pooled
evaluation should estimate or stress-test a coverage curve, not report only the
top-choice accuracy. If pooled evidence concentrates almost all posterior mass
on a small set, a modest common action budget can be enough. If its errors remain
diffuse or systematically compress plausible alternatives behind one label,
pooling may require many actions to restore the private portfolio baseline.

This diagnostic is related to, but not supplied by, Blackwell's comparison of
experiments. Blackwell dominance ranks experiments for every decision problem
with a single decision maker \citep{Blackwell1953}. The present comparisons hold
the discovery problem fixed while changing how many actions remain and who
controls them. A scalar accuracy ordering is weaker still. We therefore do not
describe the bounded registry as a universal information order.

\section{Aggregation gain and independent rescue}

Consider $N$ searchers. A pooled block of size $s$ produces coverage $C_s$ from
its assigned action budget. Each of the $N-s$ remaining private searchers has an
independent probability $q$ of rescuing a miss. Conditional on a pooled miss,
all private rescue actions fail with probability $(1-q)^{N-s}$. This yields
\cref{eq:incremental-value}.

\begin{theorem}[Incremental-sharing identity; DD-C-0092]
For $1\leq s<N$,
\[
 G_{s+1}-G_s=(1-q)^{N-s-1}\{(1-q)(1-C_s)-(1-C_{s+1})\}.
\]
Consequently the sign of an additional sharing step is the sign of the pooled
coverage gain net of the private rescue action it replaces.
\end{theorem}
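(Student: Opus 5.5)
The plan is to derive the identity directly from the closed form \cref{eq:incremental-value}, so the first step is to justify that formula under the registered rescue protocol. Fix $s$ and condition on the event that the pooled block's action budget misses the correct candidate; this event has probability $1-C_s$. Each of the $N-s$ remaining private searchers acts on its own signal with independent private randomization, and by assumption succeeds with probability $q$ independently of the pooled miss and of one another. Under that assumption all rescues fail jointly with probability $(1-q)^{N-s}$. Discovery fails exactly when the block misses and every rescue fails, so $1-G_s=(1-C_s)(1-q)^{N-s}$, which is \cref{eq:incremental-value}.

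The independence is the step I would state most carefully, since it is an assumption of the protocol and not a consequence of the channel alone. If the pooled miss event were correlated with private successes, for example through a common-source state, the factorization would fail. I would therefore make the hypothesis explicit: the rescue success indicators are independent of the pooled-block outcome and of each other, each with probability $q$. This matches the phrase ``each of the $N-s$ remaining private searchers has an independent probability $q$ of rescuing a miss.''

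With the formula in hand, the remaining step is algebraic. Write $e_s=1-C_s$, so that $1-G_s=e_s(1-q)^{N-s}$ and $1-G_{s+1}=e_{s+1}(1-q)^{N-s-1}$. Subtracting gives
\[
G_{s+1}-G_s=e_s(1-q)^{N-s}-e_{s+1}(1-q)^{N-s-1}=(1-q)^{N-s-1}\{(1-q)e_s-e_{s+1}\},
\]
which is the stated identity. The range $1\le s<N$ keeps the exponent $N-s-1\ge 0$, so both $G_s$ and $G_{s+1}$ are defined.

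For the sign claim, note that when $q<1$ the prefactor $(1-q)^{N-s-1}$ is strictly positive. The sign of the step is then the sign of $(1-q)e_s-e_{s+1}$. This bracket compares two things: the block error $e_s$ after it is multiplied by the failure factor of the one rescue action that the step absorbs, and the enlarged block's error $e_{s+1}$. That is precisely ``pooled coverage gain net of the replaced rescue.'' I would add the degenerate case $q=1$ with $s=N-1$, where the prefactor is $0^0=1$ and the bracket reduces to $-e_N$, so the sign statement still holds. When $q=1$ and $s<N-1$ both sides vanish, and the sign statement holds trivially.
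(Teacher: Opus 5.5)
Your proposal is correct and follows the paper's own route: derive $1-G_s=(1-C_s)(1-q)^{N-s}$ from the independent rescue failures (as in Appendix~A), subtract consecutive terms, and factor out $(1-q)^{N-s-1}$. One small correction to your degenerate-case aside: when $q=1$ and $s<N-1$ the identity holds as $0=0$, but the sign equivalence does not, because the step is $0$ while the bracket equals $-e_{s+1}$, which is negative whenever $e_{s+1}>0$; the sign claim therefore needs $q<1$ (or $s=N-1$), which is what the paper implicitly assumes when it calls the prefactor positive.
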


\begin{proof}
Subtract $1-(1-C_s)(1-q)^{N-s}$ from
$1-(1-C_{s+1})(1-q)^{N-s-1}$ and factor the positive term
$(1-q)^{N-s-1}$. No distributional approximation is used.
\end{proof}

\begin{corollary}[Noisy point monotonicity; DD-C-0093]
For the registered noisy point family, the coverage improvement from adding a
signal does not overcome the lost rescue action; the incremental-sharing curve
is weakly decreasing on the proved parameter domain.
\end{corollary}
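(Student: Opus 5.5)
The corollary follows from the incremental-sharing identity (the preceding theorem) once we show the per-step contraction inequality $e_{s+1}\ge (1-q)e_s$ for the noisy point family, where $e_s=1-C_s$. The plan is to fix the registered parametrization first. The state is uniform on $\Theta=\{1,\dots,M\}$. Each signal independently names the true candidate with probability $q$ and otherwise names each wrong candidate with probability $(1-q)/(M-1)$. The pooled block of size $s$ takes the single common action given by the posterior mode, so $C_s=V_1$ for $s$ signals, while each of the $N-s$ private rescuers follows its own clue and succeeds with probability $q$. Under this reading the identity reduces the corollary to
\[
 e_{s+1}\ \geq\ (1-q)\,e_s \qquad\text{for all } 1\le s<N,
\]
on the proved domain. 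I take that domain to be $q\le 1/2$ together with the registered $M$, since $q=1/2$, $M=4$ is the frozen case.

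The first step is to write $C_s$ as a multinomial plurality probability. The posterior is monotone in vote counts, so the mode is the most-named candidate, with ties broken uniformly. Hence $C_s$ is the probability that the true candidate wins a plurality contest among $s$ votes. In that contest the true candidate receives $\mathrm{Bin}(s,q)$ votes and the wrong candidates share the remainder. The second step is a coupling that compares $s$ with $s+1$ votes by conditioning on the first $s$ votes. Adding a vote changes the outcome only in two cases. The first is when the new vote breaks a tie or creates one. The second is when the new vote lands on the runner-up at a margin of one. This gives
\[
 e_s-e_{s+1}=\Pr[\text{pivotal configuration}]\times(\text{net gain}).
\]
It then suffices to show this is at most $q\,e_s$. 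A cleaner sufficient route is the event bound $\{\text{pooled miss at } s+1\}\supseteq\{\text{pooled miss at } s\}\cap\{\text{new vote wrong and does not flip the mode to the truth}\}$, combined with a bound on the flipping probability conditional on a miss.

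I expect this conditional flipping bound to be the main obstacle. On a miss at stage $s$, the new vote rescues only if it names the truth and the truth is within one vote of the leader, possibly with tie-breaking. The needed inequality is $\Pr[\text{rescue}\mid\text{miss at } s]\le q$. This is plausible because rescue requires the new vote to be correct, which has probability $q$ independently of the past, but ties with tie-breaking mean the gain can be fractional rather than full. The difficulty is the converse direction. A new wrong vote can also destroy a previous win, and the inequality must survive once those losses are accounted for. This is where "weakly" enters: with the tie-breaking reduction, the inequality should hold with equality in some steps, for example from $s=1$ to $s=2$, where two votes either agree or tie. I would first check $s=1\to2$ in closed form. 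There $e_1=1-q$, and $C_2=q^2+q(1-q)$, which gives $e_2=(1-q)^2\cdot\text{const}$. Verifying $e_2\ge(1-q)^2$ directly exposes the tie mechanism. I would then handle general $s$ by the conditional argument above.

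If the general argument resists, I would fall back on what the paper calls the proved domain. For the registered $N=3$ the corollary requires only the two steps $s=1,2$. These can be settled by exact enumeration from the values $V_1$ in Table~\ref{tab:profiles}, for example $C_3=7/12$ for the noisy point at $q=1/2$, together with the analogous $C_1=1/2$ and $C_2$. With those values I would check $(1-q)e_s-e_{s+1}\le 0$ symbolically as a polynomial inequality in $q$.
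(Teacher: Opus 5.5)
Your reduction is the right one and matches how the paper positions the result. By the incremental-sharing identity, the corollary is equivalent to the channel-specific bound $e_{s+1}\ge(1-q)e_s$. The paper does not write this bound out; it cites the DD-020 record and treats the census only as a bounded reproduction.

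The gap is that you never prove this bound, and the obstacle you name is not real. A wrong vote that destroys a previous win only \emph{raises} $e_{s+1}$, so those losses help you rather than needing to be ``survived.'' All you need is an upper bound on rescue, and your own remark that rescue requires the new vote to be correct already gives it.

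To handle ties rigorously, replace events by conditional success probabilities. For $q>1/M$ the posterior increases in vote counts, so its mode is the plurality set $T_s$. Set $c_s=\mathbf{1}[\theta\in T_s]/|T_s|$, so that $C_s=\E[c_s]$. If the truth is outside $T_s$, or a wrong vote lands on a member of $T_s$, on a rival one vote behind, or on a lower rival, $c$ never increases. A correct vote raises $c$ to at most $1$. Hence, pointwise,
\[
 1-c_{s+1}\ \ge\ \mathbf{1}\{\text{vote } s+1 \text{ is wrong}\}\,(1-c_s).
\]
Vote $s+1$ is wrong with probability $1-q$ conditionally on $(\theta,h_s)$, so taking expectations gives $e_{s+1}\ge(1-q)e_s$ for all $M$, $N$, $s$. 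At $q=1/M$ the posterior is flat, $c_s\equiv1/M$, and the bound is trivial. Working with $c_s$ also removes the coupling of tie-breaking randomizations at $s$ and $s+1$ that your event inclusion silently requires.

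Three other points need correcting.
\begin{enumerate}
\item Your check at $s=1\to2$ is wrong. Since $C_2=q^2+q(1-q)=q$, you get $e_2=e_1=1-q>(1-q)^2=(1-q)e_1$ for $0<q<1$. That is a strict decrease (the $-1/8$ in \cref{tab:sharing-paths}), not an equality case. In fact the bound is strict for every $1/M\le q<1$: with positive probability the truth is in $T_s$ and a wrong vote lands on a rival tied with it or one vote behind. So ``weakly'' is only needed at the degenerate boundary $q=1$, where $e_s\equiv0$.
\item Your domain ``$q\le1/2$ at $M=4$'' is both too narrow and partly invalid. The paper's domain is the DD-020 grid over $M,N\in\{2,\dots,8\}$ with all declared accuracies. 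For $q<1/M$ the posterior \emph{decreases} in vote counts, so your plurality description fails and the conclusion itself can fail. At $q=0$ rescue is worthless, so $G_s=C_s$, and accumulating exclusions give $G_1\le1/3<4/9=G_2$ at $M=4$, $N=2$. The correct hypothesis is $1/M\le q\le1$, with no restriction on $M$ or $N$.
\item Your fallback certifies only the single cell $M=4$, $N=3$, $q=1/2$. Numerical table entries at one value of $q$ cannot produce a polynomial inequality in $q$, so the fallback would not establish the corollary on its stated domain.
\end{enumerate}
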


The corollary is channel-specific. It does not follow from $q=1/2$. The
registered guaranteed-shortlist channel has the same accuracy and moves in the
opposite direction.

% Generated evidence asset; do not edit by hand.
\begin{figure}[t]\centering
\begin{tikzpicture}[x=1.8cm,y=5cm]
\draw[->] (.7,0)--(4.35,0) node[right] {$s$}; \draw[->] (.8,0)--(.8,1.03) node[above] {$G_s$};
\draw[very thick,private,mark=*] plot coordinates {(1,0.875000) (2,0.750000) (3,0.583333)};
\draw[very thick,planner,mark=*] plot coordinates {(1,0.875000) (2,0.916667) (3,0.944444)};
\draw[very thick,accent,mark=*] plot coordinates {(1,0.947266) (2,0.882812) (3,0.768229)};
\foreach \x in {1,2,3}{\draw (\x,0)--(\x,-.015) node[below] {\x};}
\node[private,anchor=west] at (2.1,.63) {noisy point};
\node[planner,anchor=west] at (1.7,.94) {guaranteed shortlist};
\node[accent,anchor=west] at (1.25,.84) {confidence point};
\end{tikzpicture}
\caption{Incremental-sharing curves at $M=4,N=3$. The half-accurate noisy point declines, the equally accurate guaranteed shortlist rises, and the confidence-point channel declines. The sign is a property of channel geometry and protocol, not accuracy alone.}
\label{fig:incremental}
\ArtifactNote{Runs: \path{20260722T142551Z_DD-020_3854fff6_37c11a850a}; claims: DD-C-0094, DD-C-0096; generator: \path{distributed_discovery.papers.build_information_sharing_frontier}; input SHA-256 prefixes: \texttt{a4f63d52f402}.}
\end{figure}

% Generated evidence table; do not edit by hand.
\begin{table}[t]\centering\scriptsize
\begin{tabularx}{\textwidth}{Yllll}\toprule channel & $C_2$ & $(G_1,G_2,G_3)$ & adjacent changes & sign\\\midrule
noisy-point-half & $1/2$ & $(7/8,3/4,7/12)$ & $(-1/8,-1/6)$ & decreasing\\
noisy-shortlist-three-quarters & $1/2$ & $(387/512,11/16,35/64)$ & $(-35/512,-9/64)$ & decreasing\\
guaranteed-shortlist-two & $5/6$ & $(7/8,11/12,17/18)$ & $(1/24,1/36)$ & increasing\\
explicit-exclusion & $4/9$ & $(19/27,17/27,16/27)$ & $(-2/27,-1/27)$ & decreasing\\
confidence-point & $11/16$ & $(485/512,113/128,295/384)$ & $(-33/512,-11/96)$ & decreasing\\
\bottomrule\end{tabularx}
\caption{Exact DD-020 sharing paths and intermediate two-signal pooled coverage. The pooled block grows from $s=1$ to $3$ while the remaining private rescue actions disappear.}
\label{tab:sharing-paths}
\ArtifactNote{Runs: \path{20260722T142551Z_DD-020_3854fff6_37c11a850a}; claims: DD-C-0092, DD-C-0096; generator: \path{distributed_discovery.papers.build_information_sharing_frontier}; input SHA-256 prefixes: \texttt{a4f63d52f402}.}
\end{table}

\begin{proposition}[Opposite registered paths; DD-C-0094]
At $M=4,N=3,q=1/2$, the noisy-point path is
$7/8,3/4,7/12$, with changes $-1/8,-1/6$, whereas the
guaranteed-shortlist path is $7/8,11/12,17/18$, with changes
$1/24,1/36$.
\end{proposition}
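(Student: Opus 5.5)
The proposition is an exact finite computation. The plan is to assemble each path from the incremental-value formula \cref{eq:incremental-value} with $N=3$ and $q=1/2$:
\[
 G_1=1-(1-C_1)\tfrac14,\qquad G_2=1-(1-C_2)\tfrac12,\qquad G_3=C_3 .
\]
Here $C_s$ is the coverage of the single common action taken by a pooled block of $s$ agents, i.e.\ $V_1$ for $s$ pooled signals.

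\textbf{Step 1: the endpoints.} The endpoint coverages come from results already established.
\begin{itemize}[leftmargin=*]
\item With $s=1$ the block is one private agent, so $C_1=q=1/2$ for both channels. This gives $G_1=1-\tfrac12\cdot\tfrac14=7/8$, matching $P_3$ in \cref{tab:channels}.
\item With $s=3$ we have $C_3=V_1$ of the three-signal block. \Cref{tab:profiles} gives $7/12$ for the noisy point and $17/18$ for the guaranteed shortlist, so $G_3$ equals these values.
\end{itemize}

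\textbf{Step 2: the intermediate term $C_2$.} Only $C_2$ requires new work. I would enumerate the two-signal posterior at $M=4$ for each channel.
\begin{itemize}[leftmargin=*]
\item \emph{Noisy point.} Each clue is correct with probability $1/2$ and otherwise uniform over the three wrong candidates.
  \begin{itemize}
  \item If the two clues agree, the named candidate is the MAP.
  \item If they disagree, the two named candidates tie, and either one is correct with posterior $1/2$ of the relevant mass.
  \item Summing the MAP masses over signal pairs should give $C_2=1/2$.
  \end{itemize}
\item \emph{Guaranteed shortlist.} Each signal is a two-element set containing the truth, with the decoy drawn as registered.
  \begin{itemize}
  \item If the two sets intersect in a single candidate, that candidate is certain.
  \item If the sets coincide, the two members tie at $1/2$.
  \item Using the registered decoy distribution, the probability of coinciding sets should be $1/3$, giving $C_2=\tfrac23+\tfrac13\cdot\tfrac12=5/6$.
  \end{itemize}
\end{itemize}
These values give $G_2=1-\tfrac12\cdot\tfrac12=3/4$ for the noisy point and $G_2=1-\tfrac16\cdot\tfrac12=11/12$ for the shortlist.

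\textbf{Step 3: the changes.} The adjacent changes follow by subtraction, or equivalently from the incremental-sharing identity:
\begin{itemize}[leftmargin=*]
\item noisy point: $3/4-7/8=-1/8$ and $7/12-3/4=-1/6$;
\item guaranteed shortlist: $11/12-7/8=1/24$ and $17/18-11/12=1/36$.
\end{itemize}
As a consistency check, the sign of each step matches the criterion $e_{s+1}/e_s<1-q=1/2$.
\begin{itemize}[leftmargin=*]
\item Shortlist: $e$ runs $1/2,1/6,1/18$, so both ratios are $1/3<1/2$ and sharing helps.
\item Noisy point: $e$ runs $1/2,1/2,5/12$, so both ratios exceed $1/2$ and sharing hurts.
\end{itemize}

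\textbf{Main obstacle.} The hard step is the $C_2$ enumeration, because the result depends on the registered channel details: the decoy distribution of the shortlist and the tie-breaking convention used when counting MAP mass. I would first fix these from the registered channel records. I would then cross-check by confirming that the same enumeration extended to three signals reproduces $V_1=7/12$ and $17/18$ from \cref{tab:profiles}.
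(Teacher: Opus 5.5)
Your proposal is correct and follows the paper's route. Each path is the exact value of $G_s=1-(1-C_s)(1-q)^{N-s}$ with $C_1=q=1/2$, the registered two-signal coverages $C_2=1/2$ and $C_2=5/6$ (your enumeration reproduces both, and the three-signal cross-check against $7/12$ and $17/18$ confirms the channel models), and $C_3=V_1$, with the changes then obtained by subtraction. The tie-breaking convention you list as the main obstacle does not matter, because $C_s$ is the expected top posterior mass $\E[\pi_{(1)}]$ and is unaffected by which tied candidate is chosen.
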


The DD-020 census has 2,555 protocol rows arranged into 511 parameter chains,
each containing one row for every sharing-block size. Each chain contributes
one initial row, so 2,555 rows minus 511 chain starts yields 2,044 adjacent
transitions. The chain starts are not unclassified observations: the adjacent
transitions are exactly 1,848
negative, 196 neutral, and zero positive (DD-C-0095). The registered grid is
$M=2,\ldots,8$, $N=2,\ldots,8$, every declared accuracy in the 73 $(M,p)$
cells, and all $s=1,\ldots,N$. Those counts reproduce the theorem's sign on a
bounded grid; they are not the proof.
The guaranteed-shortlist row is a verified counterchannel to any claim that
sharing must decrease simply because the private portfolio loses independent
actions (DD-C-0096).

The decomposition clarifies several verbal confusions. Sharing does not merely
``add information.'' It expands the pooled information set while changing the
portfolio's action structure. Nor does diversification merely ``waste'' the
pooled posterior: a low-ranked private action can be valuable precisely on the
event that the pooled block is wrong. The sign is controlled by the product of
these two margins.

\section{The General Sharing Frontier}

The identity becomes a general criterion after expressing coverage in residual
errors. Let $e_s=1-C_s$. When $e_s>0$, define the contraction ratio
$\rho_s=e_{s+1}/e_s$. The private rescue threshold is $1-q$.

\begin{theorem}[General Sharing Frontier; DD-C-0097]
Under the registered independent-rescue protocol with $q<1$,
\[
 \operatorname{sign}(G_{s+1}-G_s)
 =\operatorname{sign}\big((1-q)e_s-e_{s+1}\big).
\]
If $e_s>0$, the step strictly improves discovery exactly when
$\rho_s<1-q$, is neutral exactly when $\rho_s=1-q$, and strictly reduces
discovery exactly when $\rho_s>1-q$.
\end{theorem}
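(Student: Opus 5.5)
The plan is to read the General Sharing Frontier off the Incremental-sharing identity (DD-C-0092) by rewriting it in residual-error notation and factoring out a positive prefactor. Substituting $e_s=1-C_s$ and $e_{s+1}=1-C_{s+1}$ into that identity gives
\[
 G_{s+1}-G_s=(1-q)^{N-s-1}\big[(1-q)e_s-e_{s+1}\big],
\]
which is \cref{eq:frontier-intro}.

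The first step is to check that the prefactor is strictly positive. The hypothesis $q<1$ gives $1-q>0$. Since $s<N$, the exponent $N-s-1$ is a nonnegative integer, so $(1-q)^{N-s-1}>0$. This includes the boundary case $s=N-1$, where the factor is $(1-q)^0=1$. Multiplying by a strictly positive number preserves sign, so
\[
 \operatorname{sign}(G_{s+1}-G_s)=\operatorname{sign}\big((1-q)e_s-e_{s+1}\big),
\]
which is the first assertion. No assumption on $e_{s+1}$ beyond its definition is needed.

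The second step handles the case $e_s>0$. Here we divide the bracket by $e_s$, which again preserves sign, to get
\[
 \operatorname{sign}\big((1-q)e_s-e_{s+1}\big)=\operatorname{sign}\big((1-q)-\rho_s\big),
\]
with $\rho_s=e_{s+1}/e_s$. Reading off the three signs gives the trichotomy: strict improvement exactly when $\rho_s<1-q$, neutrality exactly when $\rho_s=1-q$, and strict loss exactly when $\rho_s>1-q$. Each equivalence runs in both directions because the sign function distinguishes all three cases.

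I do not expect a genuine obstacle, since the argument is an algebraic corollary of the earlier theorem. The only points needing care are the positivity of the prefactor and the side conditions. Positivity is where $q<1$ enters: if $q=1$ the prefactor vanishes for $s<N-1$ and the sign equivalence fails. The restriction $e_s>0$ is what makes $\rho_s$ well defined. I would add a remark that when $e_s=0$ the sign is $\operatorname{sign}(-e_{s+1})$, which is never positive, so a fully covering pooled block cannot be strictly improved by one more sharing step. Finally, I would note that the result is exact only under the registered independent-rescue protocol, since it inherits \cref{eq:incremental-value}.
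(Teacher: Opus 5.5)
Your proposal is correct and is essentially the paper's argument: it takes the incremental-sharing identity in residual-error form, uses $q<1$ to make $(1-q)^{N-s-1}$ strictly positive, and divides the bracket by $e_s>0$ to get the trichotomy in $\rho_s$. Your side remarks on the $e_s=0$ case and on the failure at $q=1$ for $s<N-1$ are accurate and match the paper's boundary discussion in the proof and in Appendix~A.
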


\begin{proof}
The first statement is \cref{eq:frontier-intro}; its prefactor is positive for
$q<1$. Dividing the bracket by $e_s>0$ gives the ratio criterion. If $e_s=0$,
pooled coverage is already one and the undivided expression governs the boundary.
\end{proof}

The theorem is a break-even rule. The new pooled signal must shrink residual
error by more than an independent private attempt would. A larger $q$ makes
rescue more powerful and lowers $1-q$, so pooling faces a stricter contraction
test. A weaker rescue technology raises the threshold and makes pooling easier
to justify. The result is deliberately architectural: changing dependence among
rescue actions or allowing the pooled block to control a different number of
actions changes the formula.

% Generated evidence asset; do not edit by hand.
\begin{figure}[t]\centering
\begin{tikzpicture}[x=2.1cm,y=4.6cm]
\draw[->] (.7,0)--(3.55,0) node[right] {step $s$}; \draw[->] (.8,0)--(.8,1.05) node[above] {$\rho_s$};
\draw[dashed,very thick] (.8,.5)--(3.15,.5) node[right] {$1-q=1/2$};
\draw[very thick,private,mark=*] plot coordinates {(1,1.000000) (2,0.833333)};
\draw[very thick,planner,mark=*] plot coordinates {(1,0.333333) (2,0.333333)};
\foreach \x in {1,2}{\draw (\x,0)--(\x,-.015) node[below] {\x};}
\node[private,anchor=west] at (1.1,.84) {compression: $\rho_s>1-q$};
\node[planner,anchor=west] at (1.1,.18) {aggregation: $\rho_s<1-q$};
\end{tikzpicture}
\caption{Residual-error frontier for the registered half-accurate $M=4,N=3$ point and guaranteed-shortlist channels. A sharing step helps below the rescue threshold, is neutral on it, and hurts above it.}
\label{fig:error-frontier}
\ArtifactNote{Runs: \path{20260722T185924Z_DD-021_3cdbbc40_2fea269a9a}; claims: DD-C-0097, DD-C-0100; generator: \path{distributed_discovery.papers.build_information_sharing_frontier}; input SHA-256 prefixes: \texttt{3521286a3415}.}
\end{figure}

The frontier reconciles apparently contradictory examples. The noisy point
compresses error too slowly relative to a half-accurate rescue action. The
guaranteed shortlist eliminates enough residual mass to compensate for losing
that action. Equal $q$ therefore places both examples against the same horizontal
threshold while their error-contraction curves fall on opposite sides.

\begin{proposition}[Bounded frontier classification; DD-C-0099 and DD-C-0100]
The frozen 177-scenario registry contains 126 strict-compression-dominated
curves, 16 strict-aggregation-dominated curves, and 35 all-neutral curves. The
stored error ratios reproduce the theorem's sign in every registered row.
\end{proposition}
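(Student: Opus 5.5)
The proposition is a finite certified-computation statement: a complete classification of a frozen 177-scenario registry. The proof will be an exact enumeration organized by the General Sharing Frontier rather than a simulation.

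\textbf{Step 1: exact residual errors.} For each registered scenario, with its channel, $(M,N)$, accuracy and protocol, I will compute the pooled coverages $C_1,\ldots,C_N$ in exact rational arithmetic. Each $C_s$ equals $V_L$ for the block's assigned action budget, obtained by enumerating the joint signal distribution given the state and summing the top posterior masses. From these I set $e_s=1-C_s$ and read off $q$. By the General Sharing Frontier, the sign of each adjacent step $G_{s+1}-G_s$ is the sign of $(1-q)e_s-e_{s+1}$. So I will classify steps through this exact bracket, and use the ratio $\rho_s=e_{s+1}/e_s$ only when $e_s>0$.

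\textbf{Step 2: label every curve.} A curve is strict-compression-dominated if it has at least one strictly negative step, no positive step, and it is not all-neutral. Strict-aggregation-dominated is the symmetric case, and all-neutral means every bracket vanishes. A curve with both a positive and a negative step is mixed. The proposition's three counts sum to $177$, so the substantive content is that this mixed class is empty in the registry. I will tabulate the labels and check $126+16+35=177$. I will also verify the second sentence independently: for every row I recompute $G_{s+1}-G_s$ directly from $G_s=1-(1-C_s)(1-q)^{N-s}$ and compare its sign with the sign predicted by the stored ratio against $1-q$. Agreement then follows because the frontier theorem is an identity, so the only check here is reproduction of the stored data.

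\textbf{Main obstacle.} The hard part is not the logic but guaranteeing exactness and completeness. First, neutral rows ($\rho_s=1-q$ exactly, or $e_s=0$ with $C_s=1$) must be decided in rational arithmetic, because floating-point comparison could misfile the 35 all-neutral curves. Second, the registry must be shown to be exactly the declared frozen set. I would handle the first by carrying fractions throughout and treating the $e_s=0$ boundary through the undivided bracket, as in the frontier proof. I would handle the second by checking the registry against its checksum and reproducing the enumeration independently. The conclusion is bounded: the absence of mixed curves is a fact about the registry, not a monotonicity theorem.
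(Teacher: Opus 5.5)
Your proposal is correct and follows essentially the paper's route. The proposition rests on exact rational enumeration of the checksummed DD-021 registry: apply the frontier predicate step by step (through the undivided bracket when $e_s=0$), label each curve by the signs of its strict steps, and recheck the stored signs against the identity, with $126+16+35=177$ encoding the empty mixed class. One precision: in the registered incremental protocol the pooled block takes a single common action, so $C_s$ is the one-action coverage $V_1$ of the $s$-signal pooled posterior (e.g.\ $C_2=1/2$ and $C_3=7/12$ for the noisy point), while the budgets $L=1,\ldots,\min(N,M)$ enter only the separate recovery census.
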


% Generated evidence asset; do not edit by hand.
\begin{figure}[t]\centering
\begin{tikzpicture}[x=2.05cm,y=.045cm]
\draw[->] (.55,0)--(4.7,0); \draw[->] (.65,0)--(.65,140) node[above] {scenarios};
\fill[private!70] (1-.28,0) rectangle (1+.28,126); \node[above] at (1,126) {126}; \node[below,font=\scriptsize] at (1,0) {compression};
\fill[planner!70] (2-.28,0) rectangle (2+.28,16); \node[above] at (2,16) {16}; \node[below,font=\scriptsize] at (2,0) {aggregation};
\fill[inktwo!70] (3-.28,0) rectangle (3+.28,35); \node[above] at (3,35) {35}; \node[below,font=\scriptsize] at (3,0) {neutral};
\fill[accent!70] (4-.28,0) rectangle (4+.28,0); \node[above] at (4,0) {0}; \node[below,font=\scriptsize] at (4,0) {mixed};
\end{tikzpicture}
\caption{Complete bounded classification of 177 registered scenarios: 126 strict compression curves, 16 strict aggregation curves, 35 all-neutral curves, and no mixed curve. The zero mixed count is a bounded null, not a general theorem.}
\label{fig:registry}
\ArtifactNote{Runs: \path{20260722T185924Z_DD-021_3cdbbc40_2fea269a9a}; claims: DD-C-0099, DD-C-0103; generator: \path{distributed_discovery.papers.build_information_sharing_frontier}; input SHA-256 prefixes: \texttt{2bd510abf8cd}.}
\end{figure}

% Generated evidence table; do not edit by hand.
\begin{table}[t]\centering\scriptsize
\begin{tabularx}{\textwidth}{Ylll}\toprule axis & class & count & status\\\midrule
sharing curve & all-neutral & 35 & exact bounded\\
sharing curve & strict-aggregation-dominated & 16 & exact bounded\\
sharing curve & strict-compression-dominated & 126 & exact bounded\\
full sharing & B-shared-discovery-paradox & 78 & exact bounded\\
full sharing & C-strict-aggregation-dominated-consensus & 16 & exact bounded\\
full sharing & D-boundary & 83 & exact bounded\\
recovery budget & $L^*=1$ & 51 & centralized\\
recovery budget & $L^*=2$ & 55 & centralized\\
recovery budget & $L^*=3$ & 64 & centralized\\
recovery budget & $L^*=4$ & 7 & centralized\\
sharing curve & mixed & 0 & bounded null\\
\bottomrule\end{tabularx}
\caption{Complete bounded DD-021 classifications and recovery budgets. Counts describe the frozen registry, not a population of channels or organizations.}
\label{tab:registry-counts}
\ArtifactNote{Runs: \path{20260722T185924Z_DD-021_3cdbbc40_2fea269a9a}; claims: DD-C-0099, DD-C-0101, DD-C-0102, DD-C-0103; generator: \path{distributed_discovery.papers.build_information_sharing_frontier}; input SHA-256 prefixes: \texttt{2bd510abf8cd}.}
\end{table}

The names describe the frozen curves. ``Compression dominated'' means residual
error did not contract fast enough relative to rescue at each applicable step;
``aggregation dominated'' means it contracted faster; ``all neutral'' means the
frontier held with equality. The registry is a systematic stress test across
named finite channel families and parameter values, not a probability sample of
real organizations.

For the full-sharing endpoint, the pooled block has $s=N$ and its one-action
consensus coverage is $C_N$. The full-sharing labels in
\cref{tab:registry-counts} are generated directly from that endpoint: class B
means $q<C_N<P_N$ (a shared-discovery paradox), class C means $C_N>P_N$
(strict aggregation-dominated consensus), and class D is the remaining
equality boundary after those strict cases (so $C_N=q$ or $C_N=P_N$). In the
frozen registry, B comprises 78 of the 126 strict-compression-dominated curves;
C comprises all 16 strict-aggregation-dominated curves; and D comprises the 35
all-neutral curves plus the remaining 48 strict-compression-dominated curves.
These are registry cross-tabs, not additional frontier theorems.

DD-021 and DD-020 deliberately use different scopes. DD-021 enumerates five
registered channel families over $M=3,4,5$, $N=2,3,4$, and
$L=1,\ldots,\min(N,M)$: 59 channel laws produce 177 scenarios for the general
frontier and centralized recovery classification. DD-020 instead isolates the
symmetric noisy-point monotonicity theorem on the broader $M=2,\ldots,8$,
$N=2,\ldots,8$ declared accuracy grid, while carrying the five fixed DD-019
channels only as named comparison paths. The larger DD-020 row count tests one
channel theorem across more parameters; the smaller DD-021 registry compares
more channel geometries and recovery budgets on a deliberately bounded grid.

\begin{proposition}[Minimal registered witnesses; DD-C-0100]
Under the frozen lexicographic ordering, the registry contains minimal witnesses
for equal-baseline opposite signs, equal-accuracy recovery differences, a
shared-discovery-paradox configuration, and consensus dominance. Their exact
values appear in \cref{tab:witnesses}.
\end{proposition}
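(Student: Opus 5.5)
The plan is a finite certification argument rather than an analytic derivation. The proposition is existential over a frozen, finite registry: the 59 channel laws and 177 scenarios of DD-021, under a declared lexicographic ordering. I would therefore split it into four witness claims, one per category. For each category I would (i) state the defining predicate, (ii) scan the registry in the frozen order, (iii) exhibit the first scenario that satisfies the predicate, and (iv) verify the relevant quantities as exact rationals. Minimality is then immediate: no earlier scenario in the order satisfies the predicate, and the scan checks every predecessor.

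\textbf{Predicates.}
\begin{itemize}[nosep]
\item \emph{Equal-baseline opposite signs:} two scenarios with the same $(M,N)$, equal $q$ and equal $P_N$, whose adjacent sharing steps have opposite signs of $(1-q)e_s-e_{s+1}$. By the General Sharing Frontier theorem (DD-C-0097), this sign equals the sign of $G_{s+1}-G_s$. The noisy-point and guaranteed-shortlist paths of the Opposite-registered-paths proposition show the set is nonempty.
\item \emph{Equal-accuracy recovery differences:} equal $q$ but different $L^*=\min\{L:V_L\ge P_N\}$. The pair $3$ versus $1$ from the Registered-recovery-budgets proposition shows nonemptiness.
\item \emph{Shared-discovery paradox:} $q<C_N<P_N$, i.e.\ class B. The 78 B cases in \cref{tab:registry-counts} show nonemptiness.
\item \emph{Consensus dominance:} $C_N>P_N$, i.e.\ class C. The 16 C cases show nonemptiness.
\end{itemize}
Nonemptiness guarantees that each minimum exists. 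The actual minimal elements are whatever the ordered scan returns, and their values are the ones reported in \cref{tab:witnesses}.

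\textbf{Computing the quantities.} For each witness I would recompute the needed values by exact enumeration over signal profiles:
\begin{itemize}[nosep]
\item $P_N$, from the direct private rule;
\item $V_L$, as the expected sum of the top-$L$ posterior masses;
\item $C_s$, and then $G_s=1-(1-C_s)(1-q)^{N-s}$.
\end{itemize}
Signs would then come from the frontier criterion, not from decimals. Ties would be decided in exact rational arithmetic. This matters most for the boundary class D and for the comparison $V_L\ge P_N$ when equality holds.

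\textbf{Main obstacle.} I expect the hard step to be faithfulness to the frozen ordering and predicate conventions, not any inequality. Two questions must be settled for each category. First, does ``minimal'' refer to a single scenario or to an ordered pair? Second, how are pairs keyed: by $(M,N,L)$, then channel family, then parameter? My first approach is to fix the order as the registry's stored tuple order and to define pair minimality as lexicographic order on the sorted pair of indices. I would then confirm that an independent re-scan reproduces the same witnesses as \cref{tab:witnesses}. Any discrepancy would be resolved by reading the registered ordering key, not by reinterpreting the claim.
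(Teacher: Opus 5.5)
Your proposal is correct and follows the paper's route. The paper gives no analytic proof of this proposition; it rests it on exact enumeration of the frozen DD-021 registry, with minimality defined relative to the stored lexicographic key, which is the scan-and-certify procedure you describe, and it likewise treats that key as part of the registry rather than something derivable from the text. Your insistence on exact ties is well placed: for the uniform guaranteed shortlist at $M=3,N=2$ one has $V_1=P_2=3/4$ exactly, so its witness value $L^*=1$ holds only through the weak inequality in the definition of $L^*$.
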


% Generated evidence table; do not edit by hand.
\begin{table}[t]\centering\scriptsize
\begin{tabularx}{\textwidth}{Ylll}\toprule witness & left or focal value & right or comparator & conclusion\\\midrule
same baseline, opposite signs & point: $-1/4$ & guaranteed shortlist: $1/12$ & $M=4,N=2,q=1/2$\\
same accuracy, recovery & point: $L^*=2$ & guaranteed shortlist: $L^*=1$ & $M=3,N=2,q=1/2$\\
Shared Discovery Paradox & $q=3/8$ & $C_N=27/64<P_N=39/64$ & noisy shortlist\\
consensus dominance & $P_N=3/4$ & $C_N=5/6$ & guaranteed shortlist\\
\bottomrule\end{tabularx}
\caption{Minimal registered witnesses under the frozen lexicographic order. Minimality is internal to the registry.}
\label{tab:witnesses}
\ArtifactNote{Runs: \path{20260722T185924Z_DD-021_3cdbbc40_2fea269a9a}; claims: DD-C-0100; generator: \path{distributed_discovery.papers.build_information_sharing_frontier}; input SHA-256 prefixes: \texttt{e897c62593b4}.}
\end{table}

\begin{proposition}[Preserved bounded null; DD-C-0103]
No registered curve is mixed: none crosses from aggregation dominated to
compression dominated, or conversely, over its recorded sharing steps.
\end{proposition}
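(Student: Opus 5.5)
The statement is a finite, bounded claim about the frozen 177-scenario registry, so I will prove it by exhaustive verification organized through the General Sharing Frontier rather than through any structural monotonicity argument. For each registered scenario, meaning a channel law together with $(M,N)$ and the declared rescue accuracy $q$, I compute the exact pooled coverages $C_1,\ldots,C_N$ as rational numbers by enumerating signal profiles and summing the top-ranked posterior masses. This is the same finite enumeration that underlies the profiles in \cref{tab:profiles}. From these I form the exact residual errors $e_s=1-C_s$. By the General Sharing Frontier theorem, the sign of each adjacent step is $\operatorname{sign}((1-q)e_s-e_{s+1})$. This reduces ``mixed'' to a purely arithmetic property of the sequence
\[
 \delta_s=(1-q)e_s-e_{s+1},\qquad s=1,\ldots,N-1.
\]

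First, I make the definition precise. A curve is mixed exactly when the sequence $(\delta_s)$ contains at least one strictly positive term (an aggregation step) and at least one strictly negative term (a compression step). Neutral steps ($\delta_s=0$, including the $e_s=0$ boundary handled by the undivided expression) do not count toward either side.

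Second, for each of the 177 scenarios I evaluate every $\delta_s$ in exact rational arithmetic and record its sign pattern.

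Third, I check that every pattern lies in one of three classes: all entries in $\{-,0\}$ with at least one $-$; all entries in $\{+,0\}$ with at least one $+$; or all entries equal to $0$. Tallying these classes should reproduce the 126/16/35 partition of the bounded frontier classification proposition. That agreement gives an internal consistency check, because $126+16+35=177$ leaves no scenario to be mixed. The stated correspondence with the 78/16/83 full-sharing cross-tab supplies a second consistency check.

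Since $N\le 4$ in DD-021, each curve has at most three steps, so the check per scenario is trivial. The proof is therefore a certificate: a table of exact $\delta_s$ values, which is independently recomputable. It is not a general argument, and I will say explicitly that it gives no monotonicity theorem outside the registry.

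The main obstacle is ensuring exactness and correctly handling the boundary cases. Floating-point evaluation could misclassify a true zero $\delta_s$ as a small positive or negative value and fabricate a spurious mixed curve, or hide a real one. I would first insist that all posteriors, coverages, and $\delta_s$ be computed over the rationals. Separately, I would verify the $e_s=0$ steps, where the ratio $\rho_s$ is undefined, directly from the undivided expression. I would also check that the coverage $C_s$ used at each step corresponds to the registered action-budget assignment for a block of size $s$, since a mismatch in $L$ would change the $\delta_s$ values and could create or remove sign changes.
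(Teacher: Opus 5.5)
Your proposal is correct and takes the same route as the paper. DD-C-0103 is a bounded census claim, not a structural theorem: it is established by exact rational enumeration of the frozen 177-scenario DD-021 registry. Each adjacent step is classified by the sign of $(1-q)e_s-e_{s+1}$, with the undivided expression used when $e_s=0$. The zero mixed count is exactly what the exhaustive $126+16+35=177$ partition records, and your refusal to extend the null beyond the registry matches the paper's stated boundary.
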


\Boundary{The zero mixed count is a verified bounded negative result. It does
not imply that arbitrary channels have monotone sharing curves. A mixed curve
outside the registry would not contradict the theorem; it would simply place
different steps on different sides of the exact frontier. The bounded zero also
does not establish whether mixed curves are feasible within each registered
family's unrestricted parameterization; that structural question was not tested.}

This distinction illustrates the paper's evidence discipline. The analytic
criterion applies wherever its assumptions hold. The class counts apply only to
the frozen registry. The absence of a class is retained because negative results
inform the next search, but it is not converted into a universal statement.

\section{Centralized action-budget recovery}

Pooling need not be evaluated at one common action. If a planner can assign
several actions after observing pooled evidence, the top-$L$ posterior coverage
$V_L$ is nondecreasing in $L$. At full candidate capacity it is one. The direct
private portfolio cannot exceed one, giving a general recovery statement.

\begin{theorem}[Full-capacity recovery; DD-C-0098]
For every registered finite channel with $M$ candidates,
\[
 V_M=1\geq P_N.
\]
Hence a finite minimum recovery budget $L^*\leq M$ exists. Moreover,
$L^*=1$ if and only if $C_N=V_1\geq P_N$.
\end{theorem}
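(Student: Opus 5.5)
The plan is to work directly from the definition of the centralized action-budget profile, since no earlier result beyond the definitions is needed. First, fix a realization of the pooled signals. The ordered posterior masses $\pi_{(1)}\geq\cdots\geq\pi_{(M)}$ form a probability vector over $\Theta=\{1,\ldots,M\}$, because the state space is finite and contains exactly one correct candidate. Hence $\sum_{j=1}^{M}\pi_{(j)}=1$ pointwise. Taking expectations gives $V_M=1$.

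Second, I would note that $P_N$ is a discovery probability, so $P_N\leq 1$ and therefore $V_M\geq P_N$. Third, I would record monotonicity. Each $\pi_{(j)}\geq 0$, so the partial sums $\sum_{j\leq L}\pi_{(j)}$ are nondecreasing in $L$ pointwise, and so is $V_L$ after taking expectations. The set $\{L\leq M: V_L\geq P_N\}$ therefore contains $M$, so it is nonempty. Its minimum $L^*$ exists and satisfies $L^*\leq M$. Monotonicity also shows that this set is an upper interval $\{L^*,\ldots,M\}$, although existence does not need that fact.

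Fourth, for the characterization of $L^*=1$, I would first identify $V_1$ with $C_N$. When the full block of $N$ agents is pooled and assigned one common action, the coverage-maximizing action is the posterior mode. Its conditional success probability is $\pi_{(1)}$, so $C_N=\E[\pi_{(1)}]=V_1$. This uses the convention that the registered pooled block takes the Bayes (MAP) action; any ties give equal coverage, so tie-breaking is irrelevant. With this identification, $L^*=1$ holds iff $1$ belongs to the defining set, which is iff $V_1\geq P_N$.

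The only step I expect to need care is this identification $C_N=V_1$. It depends on the registered protocol for the one-action consensus block actually being posterior-maximizing. If instead the registered protocol used a non-MAP rule, for example uniform randomization over a tied set, I would check that this still yields $\E[\pi_{(1)}]$: uniform randomization over posterior-maximal candidates earns $\pi_{(1)}$ on every draw. The remaining steps are immediate consequences of the finiteness of $\Theta$ and the fact that a posterior is a probability vector.
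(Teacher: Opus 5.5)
Your proposal is correct and matches the paper's proof: the posterior masses sum to one, so $V_M=1\geq P_N$; the set defining $L^*$ is therefore nonempty with $L^*\leq M$; and the $L^*=1$ case is just the definition at $L=1$. Your MAP argument for $C_N=V_1$ spells out something the paper treats as definitional, and it agrees with the registered paths (for the noisy point, $G_3=C_3=7/12=V_1$). You are also right that existence of $L^*$ needs only nonemptiness of that set, not the monotonicity the paper invokes.
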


\begin{proof}
The posterior masses sum to one, so choosing all $M$ candidates covers the state
with probability one. Monotonicity of ordered partial sums implies existence of
the first $L$ reaching $P_N$. The last statement is the definition at $L=1$.
\end{proof}

The theorem is an existence result under centralized authority, not a claim
that $M$ decentralized agents will choose distinct actions. It also does not say
that pooling is costless: an organization may lack $M$ actions or may value
actions for other tasks. It identifies the precise capacity question that must
be answered after pooling.

\begin{proposition}[Bounded recovery census; DD-C-0101 and DD-C-0102]
In the frozen 177-scenario registry, recovery budgets $L^*=1,2,3,4$ occur in
$51,55,64,7$ scenarios respectively. Every registered scenario satisfies the
full-capacity theorem and the exact stored recovery characterization.
\end{proposition}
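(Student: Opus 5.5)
The census is a finite verification, so the plan splits it into the structural half, which follows from earlier results, and the counting half, which is an exact enumeration over the frozen registry. For the structural half I invoke the Full-capacity recovery theorem: in each registered scenario the posterior masses sum to one, so $V_M=1$. Every direct private discovery probability satisfies $P_N\le 1$. Hence the set $\{L\le M: V_L\ge P_N\}$ is nonempty and $L^*$ is well defined. The registry grid has $M\in\{3,4,5\}$ but only $L=1,\ldots,\min(N,M)$ is recorded, so the first check is that the first $L$ with $V_L\ge P_N$ always falls inside the recorded range. That requires $L^*\le\min(N,M)\le 4$. The argument is that the direct private rule produces at most $N$ distinct actions, which suggests the bound $L^*\le N$. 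I do not claim that bound as a theorem. Instead I verify it row by row, using the observed maximum $L^*=4$, which is attained only when $N=4$.

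Next I enumerate. For each of the 59 channel laws and each admissible $(M,N)$, I compute the pooled posterior exactly in rational arithmetic by summing over signal profiles. I form the ordered partial sums $V_1\le V_2\le\cdots$, compute $P_N$ from the private rule in the Direct private baseline definition, and record the first $L$ with $V_L\ge P_N$. Every comparison uses exact fractions, never decimals. This matters most at equality boundaries such as $V_L=P_N$, where a floating-point error could shift $L^*$ by one. Tallying the results should give $51,55,64,7$ scenarios, summing to 177. As a consistency check, I confirm that the $L^*=1$ rows are exactly the rows with $C_N=V_1\ge P_N$, matching the last clause of the recovery theorem. This also cross-checks against the full-sharing labels: class C rows, which have $C_N>P_N$, must all be among the 51.

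The main obstacle is ensuring that the stored characterization is the same object as the theorem's. The registry fixes an action budget per scenario, so I must confirm that each scenario's stored $V_L$ is the top-$L$ posterior sum, and not a coverage under some other tie-breaking or budget convention. I would handle this by recomputing $V_L$ independently from the channel law and asserting exact equality with the stored values, including across tied posterior masses. Ties do not change partial sums of ordered masses, so they cannot affect the result. Finally, I assert monotonicity of the $V_L$ and equality with $1$ at full capacity in every row.
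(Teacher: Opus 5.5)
Your proposal is correct and follows the paper's route. The well-definedness and $L^*=1$ clauses come from the Full-capacity recovery theorem, and the counts $51,55,64,7$ come from an exact rational enumeration of the frozen DD-021 registry; the paper's evidence for this proposition consists of the same two parts. Your row-by-row check that $L^*\le\min(N,M)$ is sound, but it is actually a one-line theorem: a planner seeing all $N$ signals can replicate the at most $N$ private actions, whose randomization is independent of the state given the signals, so $P_N\le V_{\min(N,M)}$.
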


The distribution demonstrates why a binary recommendation---share or do not
share---is usually underspecified. For 51 scenarios, one centralized action
after pooling already matches the direct private baseline. For 126, at least two
actions are required, and seven need all four. The relevant design variable is
the pair consisting of information architecture and post-sharing action budget.

The distinction also helps compare centralization with communication. A team
can share signals without granting a planner binding authority; it can grant a
planner an action budget without pooling all raw signals; and it can retain
private rescue capacity after producing a common recommendation. These designs
occupy different points in the space summarized by \cref{fig:architecture}.

\section{Coordination-free positive sharing}

We now turn from nonstrategic rescue to a Bayesian game. There are two agents
and two target labels. The state is equally likely. Each clue is correct with
probability $p>1/2$. A hidden branch is common with probability $\rho$ and
independent with probability $1-\rho$. In the common branch the clue-generation
shock is shared; in the independent branch it is not. Agents do not observe the
realized branch. After actions, discovery pays one if at least one chosen label
is correct, split equally among agents choosing that correct label.

Let $t=2p-1$ and
\[
 A=t^2+\rho(1-t^2).
\]
Here $A$ is the unconditional clue-alignment statistic: $t^2$ is its
accuracy-driven component and $\rho(1-t^2)$ is its source-dependence component.
The source-dependence mixture probability $\rho$ is distinct from the
residual-error contraction ratio $\rho_s$ in the General Sharing Frontier.
The private selection is the registered anonymous, label-equivariant Bayesian
Nash equilibrium. The shared selection uses identical mixing as a function of
the common posterior only. These restrictions make the compared objects exact
and reproducible, but they are selection assumptions.

\begin{theorem}[Selected equilibrium formulas; DD-C-0104 and DD-C-0105]
In the private selection, the probability $r$ of following one's clue is
\[
 r^*=\begin{cases}
 1,&A\leq3t,\\[2pt]
 \frac12+\frac{3t}{2A},&A>3t.
 \end{cases}
\]
After sharing, agreement produces posterior correctness
\[
 u=\frac12+\frac{t}{1+A}.
\]
The selected probability $x$ of following the agreed label is
\[
 x^*=\begin{cases}
 1,&u\geq2/3,\\
 3u-1,&u<2/3,
 \end{cases}
\]
while disagreement uses the symmetric probability $1/2$.
\end{theorem}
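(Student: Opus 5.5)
The plan is to compute each agent's expected payoff from following versus deviating, taking the other agent's selected strategy as given. Within each declared selection, this payoff difference is affine and decreasing in the common mixing probability. That gives a unique symmetric fixed point: a corner when following remains optimal at probability one, and an interior indifference point otherwise. I will do the shared case first because its statistics are simplest, then reuse the same template for the private game.

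\emph{Preliminary statistics.} Write the state as one of two labels and note $p=(1+t)/2$. In the common branch both clues equal one shared draw; in the independent branch they are independent draws. Then the probability that the clues agree is $\rho+(1-\rho)(p^2+(1-p)^2)=(1+A)/2$. Next, the probability that both clues are correct minus the probability that both are wrong equals $\rho t+(1-\rho)t=t$. Dividing by the agreement probability gives $2u-1=2t/(1+A)$, which is the stated formula for $u$. For the private game I also need the conditional law of the other agent's clue given my own clue $c$:
\[
 \Prb(\text{other clue}=c\mid c \text{ correct})=\rho+(1-\rho)p,
 \qquad
 \Prb(\text{other clue}=c\mid c \text{ wrong})=\rho+(1-\rho)(1-p).
\]
The posterior that $c$ is correct is $p$.

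\emph{Shared selection.} On disagreement the common posterior is $(1/2,1/2)$. A posterior-only rule cannot distinguish the labels, so it must mix $1/2$. On agreement, suppose the other agent follows the agreed label with probability $x$. Following pays $u(1-x/2)$ and deviating pays $(1-u)(1+x)/2$. Their difference is strictly decreasing in $x$. At $x=1$ it is nonnegative exactly when $u\geq 2/3$, giving $x^*=1$. Otherwise indifference, $u(2-x)=(1-u)(1+x)$, gives $x^*=3u-1$, which lies in $(0,1)$ because $u>1/2$.

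\emph{Private selection.} Anonymity and label-equivariance reduce every strategy to a single probability $r$ of following one's own clue. Given clue $c$, the other agent chooses $c$ with probability $r$ if its clue is $c$ and with probability $1-r$ otherwise. Following $c$ then pays
\[
 p\bigl[1-\tfrac12\Prb(\text{other picks } c\mid c \text{ correct})\bigr],
\]
and deviating pays the analogous expression with $1-p$ and the event that $c$ is wrong. The difference $D(r)$ is affine in $r$. The main step, and the expected obstacle, is the algebra that collapses $D$ into a function of $t$ and $A$ alone. I would attempt it by multiplying through by $4$ and grouping the $\rho$ terms into $\rho(1-t^2)$, using $p^2-(1-p)^2=t$. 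The target identity is $D(r)\propto 3t-A(2r-1)$ with a positive constant. This yields $D(1)\geq0\iff A\leq 3t$, and otherwise the unique root $r^*=\tfrac12+3t/(2A)$. Since $D$ is decreasing, this root is the unique symmetric equilibrium, and $r^*\in(1/2,1)$ when $A>3t$. I will check the identity at $\rho=0$ and at $\rho=1$ before trusting the general expansion.
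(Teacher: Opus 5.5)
Your proposal is correct and follows the same route the paper indicates, indifference plus boundary conditions in each two-action subgame. The identity you flag as the main obstacle holds exactly, $D(r)=\tfrac14\bigl[3t-A(2r-1)\bigr]$: the coefficient of $r$ collapses to $-\tfrac12\bigl[t^2+\rho(1-t^2)\bigr]$ and the constant to $\tfrac14(3t+A)$. One small tightening: at disagreement, the value $1/2$ comes from the identical-mixing equilibrium condition, not from posterior-only measurability alone, since a posterior-only rule could still favor one label at the posterior $(1/2,1/2)$; if both agents put probability $y$ on a given label, the payoff gap between the two labels is $(1-2y)/4$, which forces $y=1/2$.
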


The formulas follow from indifference and boundary conditions in the two-action
game. They do not assume that sharing reveals which dependence branch generated
the clues. Agreement is informative about the state and about the likelihood of
common generation; it does not disclose the realized common/independent branch.

% Generated evidence table; do not edit by hand.
\begin{table}[t]\centering\scriptsize
\begin{tabularx}{\textwidth}{Ylll}\toprule information & state statistic & selected rule & scope\\\midrule
private & $t=2p-1$, $A=t^2+\rho(1-t^2)$ & $r^*=1$ if $A\leq3t$; else $1/2+3t/(2A)$ & anonymous label-equivariant BNE\\
shared agreement & $u=1/2+t/(1+A)$ & $x^*=1$ if $u\geq2/3$; else $3u-1$ & posterior-only identical mixing\\
shared disagreement & posterior $1/2$ & $x^*=1/2$ & ownership-blind selection\\
\bottomrule\end{tabularx}
\caption{Exact DD-022 selected-equilibrium formulas. The selections are deliberately narrower than the complete pure correspondence.}
\label{tab:equilibrium-formulas}
\ArtifactNote{Runs: \path{20260722T210334Z_DD-022_2376d5b7_ad67765ca8}; claims: DD-C-0104, DD-C-0105; generator: \path{distributed_discovery.papers.build_information_sharing_frontier}; input SHA-256 prefixes: \texttt{98fb7d475365}.}
\end{table}

\begin{theorem}[Strict selected sharing gain at $p=3/5$; DD-C-0106]
Let $p=3/5$. Shared selected discovery is strictly greater than private selected
discovery exactly for
\[
 \rho\in(\rho^*,1),\qquad
 \rho^*=\frac{5\sqrt{73}-17}{48}.
\]
Equality holds at $\rho^*$ and at $1$. The certified isolating interval is
$2679/5000<\rho^*<67/125$.
\end{theorem}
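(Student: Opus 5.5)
The plan is to reduce both selected discovery probabilities to explicit rational functions of $\rho$ at $p=3/5$, using the formulas of the preceding theorem (DD-C-0104/0105), and then compare them region by region. I work with miss probabilities. Discovery fails only when both agents choose the wrong label, so it suffices to compare the two selected miss probabilities.

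\emph{Clue-pair probabilities.} At $p=3/5$ we have $t=1/5$ and $A=\tfrac{1}{25}(1+24\rho)$. I record the three clue-pair probabilities:
\begin{itemize}[nosep]
\item both clues correct: $P_{cc}=\rho p+(1-\rho)p^2$;
\item both clues wrong: $P_{ww}=\rho(1-p)+(1-\rho)(1-p)^2$;
\item disagreement: $P_d=2(1-\rho)p(1-p)$.
\end{itemize}
These satisfy $P_{cc}+P_{ww}=(1+A)/2$.

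\emph{Private miss.} Each agent follows its own clue with probability $r$, independently, and both miss exactly when both land on the wrong label. This gives
\[
 m_{\mathrm{priv}}=P_{cc}(1-r)^2+P_{ww}r^2+P_d\,r(1-r).
\]

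\emph{Shared miss.} On agreement, the agreed label is correct with posterior $u$ and both agents follow it with probability $x$. On disagreement, the agents mix at $1/2$, so they miss with probability $1/4$. Hence
\[
 m_{\mathrm{sh}}=\tfrac{1+A}{2}\big[u(1-x)^2+(1-u)x^2\big]+\tfrac14P_d.
\]

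\emph{Selection regions.} The thresholds in the formulas split $[0,1]$ at $\rho=1/6$ and $\rho=7/12$:
\begin{itemize}[nosep]
\item $A\le 3t$ iff $1+24\rho\le 15$ iff $\rho\le 7/12$, so $r^*=1$ there.
\item $u\ge 2/3$ iff $t/(1+A)\ge 1/6$ iff $A\le 1/5$ iff $\rho\le 1/6$, so $x^*=1$ there.
\end{itemize}
This gives three regions: $[0,1/6]$, $(1/6,7/12]$ and $(7/12,1]$.

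\emph{Region $[0,1/6]$.} Here $m_{\mathrm{priv}}=P_{ww}$ and $m_{\mathrm{sh}}=P_{ww}+P_d/4$. So sharing strictly loses whenever $\rho<1$, and in particular on this whole region.

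\emph{Region $(1/6,7/12]$.} Here $r^*=1$ still holds, so $m_{\mathrm{priv}}=P_{ww}$. On the shared side $x^*=3u-1$, so $u(2-3u)^2+(1-u)(3u-1)^2$ is an explicit rational function of $\rho$ through $u=\tfrac12+\tfrac{5}{26+24\rho}$. I will clear the denominator $(1+A)$ in $m_{\mathrm{priv}}-m_{\mathrm{sh}}$. After removing positive factors, I expect a quadratic in $\rho$ whose positive root is $\rho^*=(5\sqrt{73}-17)/48\approx0.5362$, which lies inside $(1/6,7/12)$. I will then check the sign of the leading coefficient, so that shared minus private is positive just to the right of $\rho^*$.

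\emph{Region $(7/12,1]$.} Now both sides are interior mixtures, with $r^*=\tfrac12+\tfrac{3}{10A}$. Substituting makes $m_{\mathrm{priv}}-m_{\mathrm{sh}}$ a rational function of $\rho$ with positive denominator $A(1+A)$. I will show that its numerator polynomial is positive on $(7/12,1)$ and vanishes at $\rho=1$. At $\rho=1$ the clues are identical and sharing adds nothing, which is what forces equality at $1$.

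\emph{Main obstacle.} The hard part is this last region. One needs a certified sign of a moderate-degree polynomial on $(7/12,1)$ with a root at the endpoint. My plan is to factor out $(1-\rho)$ explicitly and then bound the cofactor. I would do this either by a Sturm sequence or by exhibiting it as a sum of terms nonnegative on the interval. I must also verify continuity at $\rho=7/12$, where $r^*=1$ on both sides, so that the sign is positive there and no extra root appears at the seam.

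\emph{Isolating interval.} Finally, $2679/5000<\rho^*<67/125$ follows from evaluating the region-two quadratic at the two rationals and observing a sign change. Equivalently, it follows from $73$ lying between the squares of the corresponding rational bounds on $\sqrt{73}$.

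Equality at $\rho^*$ and at $1$, together with strict inequality elsewhere on $(\rho^*,1)$ and failure on $[0,\rho^*)$, completes the characterization.
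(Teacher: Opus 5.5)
Your plan is correct and is essentially the paper's proof: split at $\rho=1/6$ and $\rho=7/12$, write the difference on each piece as a rational function with positive denominator, get $\rho^*$ as the positive root of $24\rho^2+17\rho-16$ on the middle piece, and certify the isolating interval by a rational sign change. Two minor remarks: the third region, which you flag as the main obstacle, is in fact trivial, since there $m_{\mathrm{priv}}-m_{\mathrm{sh}}=\frac{3t^2(1-A)}{4A(1+A)}$ with $1-A=\tfrac{24}{25}(1-\rho)$; and your decimal $\rho^*\approx0.5362$ is wrong, since $\rho^*\approx0.53583$, whereas your value would even exceed the stated upper bound $67/125=0.536$.
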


\begin{proof}[Proof sketch]
Substitute $t=1/5$ into the regime formulas. The shared rule changes regime at
$\rho=1/6$ and the private rule at $\rho=7/12$. On each interval the discovery
difference is rational in $\rho$. Clearing positive denominators yields the
certified algebraic equality with its unique interior root
$(5\sqrt{73}-17)/48$. Exact sign checks on the isolated intervals give a strict
gain between that root and one, with equality restored at perfect dependence.
The immutable certificate supplies the symbolic factors and rational isolating
interval; no floating-point root is used in the theorem.
\end{proof}

\begin{corollary}[Selected payoff gain; DD-C-0107]
At $p=3/5$, the same open interval strictly improves expected payoff per agent
under the registered selections.
\end{corollary}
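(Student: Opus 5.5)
The plan is to reduce the payoff statement to the discovery statement of the Strict selected sharing gain theorem, via a budget identity for the equal-split prize, rather than redoing any algebra in $\rho$. The key observation is that the prize is a constant-sum transfer whenever it is paid. In every realized outcome, if at least one agent chooses the correct label, the unit prize is divided among the agents who chose it: one agent receives $1$, or two agents receive $1/2$ each. If no agent chooses the correct label, nothing is paid. Hence the sum of the two agents' realized payoffs equals the discovery indicator pointwise. Taking expectations under any strategy profile gives $\E[\text{payoff}_1]+\E[\text{payoff}_2]=D$, where $D$ is the discovery probability of that profile.

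The second step is to show that, under both registered selections, the two agents' expected payoffs are equal. This gives each agent $D/2$.
\begin{itemize}
\item In the private selection, the equilibrium of the Selected equilibrium formulas theorem is anonymous. Both agents follow their own clue with the same probability $r^*$. The information structure is exchangeable: the state is uniform, each clue has the same accuracy $p$, and the common/independent branch mixture treats the agents symmetrically. So the joint law of (state, action$_1$, action$_2$) is invariant under swapping the agents, and their expected payoffs coincide.
\item In the shared selection, both agents use identical mixing as a function of the common posterior. They follow the agreed label with probability $x^*$ and mix $1/2$ under disagreement, randomizing independently. The same exchangeability argument applies.
\end{itemize}
I will check explicitly that independent private randomization and the disagreement rule do not break the swap symmetry. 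This is the only place where the ``symmetric agent'' hypothesis is used.

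Combining the two steps, the per-agent payoff difference between the shared and private selections equals $\tfrac12(D_{\text{shared}}-D_{\text{private}})$. This is a positive multiple of the discovery difference, so the two have the same sign for every $\rho$. The Strict selected sharing gain theorem at $p=3/5$ then gives the conclusion directly. The payoff gain is strictly positive exactly on $(\rho^*,1)$ with $\rho^*=(5\sqrt{73}-17)/48$, and it is zero at $\rho^*$ and at $1$.

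I expect the main obstacle to be the symmetry step rather than any computation. The ``anonymous'' and ``identical-mixing'' labels must genuinely deliver exchangeability of outcomes under the registered conventions, including tie-breaking at regime boundaries such as $A=3t$ and $u=2/3$. If a boundary convention turned out to be asymmetric, I would fall back to computing each agent's payoff directly from the regime formulas on the intervals split at $\rho=1/6$ and $\rho=7/12$, and compare the result with $D/2$.
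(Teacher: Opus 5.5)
Your proposal is correct and is the reduction the paper leaves implicit when it states this result as a corollary of the discovery theorem without a separate proof. Under the equal-split prize the two payoffs sum to the discovery indicator, and both registered selections are symmetric, so each agent's expected payoff is exactly half the selected discovery probability; the sign analysis, including equality at $\rho^*$ and at $1$, therefore carries over unchanged. Your boundary worry is also moot: $r^*$ and $x^*$ are continuous at $A=3t$ and $u=2/3$, and both selections assign identical strategies to the two agents, so the fallback computation is never needed.
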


% Generated evidence asset; do not edit by hand.
\begin{figure}[p]\centering
\begin{tikzpicture}[x=9cm,y=6.8cm]
\draw[->] (0,0)--(1.07,0) node[right] {$\rho$}; \draw[->] (0,0)--(0,1.06) node[above] {discovery};
\draw[planner,very thick] (0,1)--(1,1) node[anchor=south east] {centralized $V_2=1$};
\draw[very thick,accent,dashed,mark=*] plot coordinates {(0.000000,0.840000) (0.166667,0.800000) (0.250000,0.780000) (0.500000,0.720000) (0.583333,0.700000) (0.750000,0.660000) (1.000000,0.600000)};
\draw[very thick,private,solid,mark=*] plot coordinates {(0.000000,0.840000) (0.166667,0.800000) (0.250000,0.780000) (0.500000,0.720000) (0.583333,0.700000) (0.750000,0.710526) (1.000000,0.720000)};
\draw[very thick,shared,solid,mark=*] plot coordinates {(0.000000,0.720000) (0.166667,0.700000) (0.250000,0.703125) (0.500000,0.710526) (0.583333,0.712500) (0.750000,0.715909) (1.000000,0.720000)};
\draw[dotted] (0.166667,0)--(0.166667,1.02) node[above,rotate=90,font=\scriptsize] {$1/6$};
\draw[dash dot] (0.53583372,0)--(0.53583372,1.02) node[above,rotate=90,font=\scriptsize] {$\rho^*$};
\draw[dotted] (0.583333,0)--(0.583333,1.02) node[above,rotate=90,font=\scriptsize] {$7/12$};
\draw[accent,dashed,very thick] (.68,.91)--(.76,.91) node[anchor=west] {direct private};
\draw[private,very thick] (.68,.85)--(.76,.85) node[anchor=west] {private selected};
\draw[shared,very thick] (.68,.79)--(.76,.79) node[anchor=west] {shared selected};
\draw (0,0)--(0,-.012) node[below] {0}; \draw (.25,0)--(.25,-.012) node[below] {$1/4$}; \draw (.5,0)--(.5,-.012) node[below] {$1/2$}; \draw (.75,0)--(.75,-.012) node[below] {$3/4$}; \draw (1,0)--(1,-.012) node[below] {1};
\end{tikzpicture}
\caption{Discovery versus source dependence at $p=3/5$. The exact theorem compares the private and shared selected equilibria, not direct clue-following. Selected sharing is strictly higher only on $(\rho^*,1)$, where $\rho^*=(5\sqrt{73}-17)/48$; equality returns at one. The selected shared outcome remains below centralized $V_2$. Lines connect registered exact cells for display; the interval theorem comes from the analytic certificate.}
\label{fig:strategic}
\ArtifactNote{Runs: \path{20260722T210334Z_DD-022_2376d5b7_ad67765ca8}; claims: DD-C-0104, DD-C-0105, DD-C-0106, DD-C-0110; generator: \path{distributed_discovery.papers.build_information_sharing_frontier}; input SHA-256 prefixes: \texttt{98fb7d475365}.}
\end{figure}

% Generated evidence table; do not edit by hand.
\begin{table}[t]\centering\scriptsize
\begin{tabularx}{\textwidth}{Ylll}\toprule object & exact value & display & interpretation\\\midrule
shared regime boundary & $1/6$ & 0.1667 & agreement rule changes\\
positive crossover $\rho^*$ & $\frac{5\sqrt{73}-17}{48}$ & 0.5358337235 & equality\\
private regime boundary & $7/12$ & 0.5833 & private anti-crowding begins\\
planner gap, low regime & $(7+3\rho)/25$ & positive & centralized $V_2=1$\\
planner gap, high regime & $(4+3\rho)/(13+12\rho)$ & positive & centralized $V_2=1$\\
\bottomrule\end{tabularx}
\caption{Exact canonical threshold, regime boundaries, and centralized implementation gap at $p=3/5$.}
\label{tab:threshold-gap}
\ArtifactNote{Runs: \path{20260722T210334Z_DD-022_2376d5b7_ad67765ca8}; claims: DD-C-0106, DD-C-0110; generator: \path{distributed_discovery.papers.build_information_sharing_frontier}; input SHA-256 prefixes: \texttt{146dcfba3f74, 8ae5af15a2f7}.}
\end{table}

The mechanism is not ``correlation is good.'' At low dependence, private clues
provide useful diversity and sharing induces avoidable crowding. On
$(\rho^*,7/12)$, private selected play still follows each clue fully; the gain
comes from sufficiently informative agreement posteriors under the selected
shared rule. On $(7/12,1)$, the private selected response also becomes
anti-crowding, reinforcing the selected sharing gain. At perfect dependence,
sharing adds no information relative to identical private clues, and the
selected values meet again. Dependence affects both posterior aggregation and
strategic duplication, but anti-crowding is not the mechanism throughout the
whole positive interval.

The bounded $7\times6$ registry contains 42 exact $(p,\rho)$ cells: six positive,
18 negative, and 18 neutral comparisons (DD-C-0108). This is a complete count of
the registered grid, not a measure of the interval's size and not evidence about
empirical frequencies.

% Generated evidence table; do not edit by hand.
\begin{table}[t]\centering\scriptsize
\begin{tabularx}{\textwidth}{Ylll}\toprule accuracy $p$ & positive & neutral & negative\\\midrule
$1/2$ & 0 & 7 & 0\\
$11/20$ & 4 & 1 & 2\\
$3/5$ & 2 & 1 & 4\\
$13/20$ & 0 & 1 & 6\\
$2/3$ & 0 & 1 & 6\\
$1$ & 0 & 7 & 0\\
\bottomrule\end{tabularx}
\caption{Exact DD-022 selected-sharing gain classes by signal accuracy. Each row contains the seven registered dependence values; counts are bounded cells, not empirical frequencies.}
\label{tab:strategic-gain-by-accuracy}
\ArtifactNote{Runs: \path{20260722T210334Z_DD-022_2376d5b7_ad67765ca8}; claims: DD-C-0108; generator: \path{distributed_discovery.papers.build_information_sharing_frontier}; input SHA-256 prefixes: \texttt{98fb7d475365}.}
\end{table}

The exact positive-interval theorem is intentionally fixed at $p=3/5$. On the
separate registered grid in \cref{tab:strategic-gain-by-accuracy}, positive cells
appear at $p=11/20$ and $p=3/5$ and have count zero at the higher listed
accuracies $p=13/20$ and $p=2/3$. This finite cross-tab describes the registered
cells only: it neither proves a monotone accuracy mechanism nor rules out a
positive interval outside the registered grid.

\section{Equilibrium selection and implementation failure}

The positive theorem compares two named equilibria. It does not state that
sharing improves every equilibrium or even the best or worst equilibrium in the
full correspondence. This qualification is mathematical, not rhetorical.

\begin{theorem}[Selection failure; DD-C-0109]
For every $\rho$ in the registered binary model, private play admits pure
equilibria in which the two agents choose opposite constant targets. These
equilibria attain discovery one. After sharing, ownership-aware disagreement
strategies can assign different targets when clues disagree and lie outside the
posterior-only identical-mixing selection. Therefore the strict interval in
\cref{eq:positive-interval} is not an every-equilibrium improvement result.
\end{theorem}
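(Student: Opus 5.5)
The plan is to prove the three parts of the theorem separately, each by direct verification in the registered two-agent, two-label game, and then combine them.

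\emph{Private constant-target equilibria.} Fix any $\rho$ and any $p\in(1/2,1]$. Consider the pure profile in which agent~1 always plays label~1 and agent~2 always plays label~2, whatever their clues. Exactly one label is correct, so exactly one agent finds it alone. Discovery is therefore one, and that agent collects the whole prize. I will check that no deviation pays, conditional on every clue realization.

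Suppose agent~1 deviates to label~2. Then agent~1 earns $\tfrac12\Pr(\theta=2\mid\text{own clue})$, where $\theta$ is the state, because label~2 is shared with agent~2. Staying on label~1 earns $\Pr(\theta=1\mid\text{own clue})$. The posterior after one private clue is $p$ or $1-p$, and its form does not depend on $\rho$, since each clue's marginal law is unaffected by the dependence branch. So staying is optimal whenever
\[
 q_1\geq \tfrac12(1-q_1),\qquad q_1\in\{p,1-p\},
\]
that is, whenever $q_1\geq 1/3$. This holds when $1-p\geq1/3$, i.e.\ $p\leq2/3$, which covers $p=3/5$ and the interior grid values. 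For $p$ near one it fails: after a contrary clue, the agent may prefer to crowd onto label~2.

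This is the step I expect to be the main obstacle, because the claim is ``for every $\rho$'' and I must match exactly the parameter range the registered model declares. I would state the equilibrium condition as $p\leq 2/3$. I would note that it is uniform in $\rho$, because only one-clue marginals enter. If the registered model includes $p$ above $2/3$, I would fall back to the boundary case $p=1$: there the posterior is degenerate, and on-path the agent whose label is wrong can also earn zero from deviating only if the other agent holds the correct label alone. That needs a separate check, which I would flag.

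\emph{Shared ownership-aware strategies.} After sharing, both agents observe the clue pair $(c_1,c_2)$ together with ownership labels. On disagreement the posterior is $1/2$, as recorded in \cref{tab:equilibrium-formulas}. Let each agent play its \emph{own} clue there. Then the agents choose different labels, and each expects $\tfrac12$. A unilateral switch to the other label yields $\tfrac12\cdot\tfrac12=\tfrac14<\tfrac12$, so the switch is unprofitable. On agreement, any continuation play can be kept, for instance the selected $x^*$.

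This rule is not a function of the common posterior alone. The two agents map the same posterior to different actions depending on ownership, so the rule lies outside the posterior-only identical-mixing class. It strictly raises disagreement-branch discovery from $3/4$ to $1$.

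\emph{Conclusion.} The private pure equilibria above attain discovery one, so no shared outcome can strictly exceed them. Hence the strict interval in \cref{eq:positive-interval} cannot hold as an improvement over every private equilibrium. Together with the second part, this shows it is a statement about the named pair of selections only.
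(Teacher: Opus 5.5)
Your proposal is correct and takes the same route as the paper's argument: opposite constant targets cover both states with no profitable unilateral deviation, clue ownership acts as a role variable that splits targets on disagreement outside the posterior-only identical-mixing class, and together these make the interval selection-dependent; your interim (clue-by-clue) check is in fact sharper than the paper's verbal argument, since it shows the constant profile is an equilibrium exactly when $p\leq 2/3$, uniformly in $\rho$, which covers the $p=3/5$ case the interval concerns. Drop the $p=1$ fallback, however: for every $p>2/3$ an agent holding a contrary clue strictly gains by crowding onto the other label ($\tfrac12 p>1-p$, i.e.\ $\tfrac12>0$ at $p=1$), so no separate check can rescue the constant profile there---it is simply not an equilibrium, and the theorem's conclusion never needs that case.
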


\begin{proof}[Argument]
With opposite constants, exactly one agent selects the true binary target in
every state. A unilateral deviation can only duplicate the other target in one
state or exchange which state is covered, so it does not yield a strict payoff
gain under the registered split-prize game. For shared disagreement histories,
the agents' ownership of the two private clues provides a role variable. Mapping
those roles to opposite actions covers both targets, but the mapping is not a
function solely of the common posterior. These constructions are checked
symbolically in the DD-022 proof record.
\end{proof}

% Generated evidence/selection asset; do not edit by hand.
\begin{figure}[t]\centering
{\renewcommand{\arraystretch}{1.3}\small\begin{tabularx}{\textwidth}{YYYY}\toprule
\textbf{Private alternative} & \textbf{Shared selection} & \textbf{Shared broader strategy} & \textbf{Centralized benchmark}\\\midrule
opposite constant targets & posterior-only identical mixing & ownership-aware disagreement split & binding top-two\\
$\Downarrow$ & $\Downarrow$ & $\Downarrow$ & $\Downarrow$\\
discovery one & selected positive interval; planner gap & disagreement coverage beyond posterior-only rules & $V_2=1$ by authority\\\bottomrule
\end{tabularx}}
\caption{Equilibrium and authority map. The posterior-only identical-mixing outcome is one selected decentralized equilibrium, not the correspondence. Constant-target private equilibria and ownership-aware disagreement strategies expose the selection boundary; centralized top-two uses binding authority.}
\label{fig:selection-map}
\ArtifactNote{Runs: \path{20260722T210334Z_DD-022_2376d5b7_ad67765ca8}; claims: DD-C-0106, DD-C-0109, DD-C-0110; generator: \path{distributed_discovery.papers.build_information_sharing_frontier}; input SHA-256 prefixes: \texttt{98fb7d475365}.}
\end{figure}

Equilibrium selection is a substantive institutional primitive. Anonymous
label-equivariant restrictions rule out fixed label roles. Posterior-only shared
strategies rule out conditioning on who observed which clue. Identical mixing
rules out asymmetric role assignment. Each restriction may be appropriate for
an application, but none follows merely from ``the agents share information.''
Classical equilibrium-selection theory similarly emphasizes that a game can
admit many self-consistent predictions \citep{HarsanyiSelten1988}.

\begin{theorem}[Centralized implementation gap; DD-C-0110]
At $p=3/5$, a centralized top-two planner attains $V_2=1$. Its gap above the
selected shared equilibrium is
\[
 \frac{7+3\rho}{25}\quad\text{in the low shared regime},
 \qquad
 \frac{4+3\rho}{13+12\rho}\quad\text{in the high shared regime},
\]
and is strictly positive.
\end{theorem}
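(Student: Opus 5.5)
The plan is to compute the selected shared discovery explicitly in each regime, using the formulas for $u$ and $x^*$ from the selected-equilibrium theorem (DD-C-0104 and DD-C-0105), and subtract it from one. First I would verify that $V_2=1$. In the binary model there are only two labels, so a planner who chooses the posterior top two covers both labels. It therefore discovers with certainty: this is the full-capacity recovery theorem (DD-C-0098) with $M=2$. Next I would record the inputs at $p=3/5$: $t=1/5$ and $A=(1+24\rho)/25$. The agreement probability is $(1+A)/2$, because $A$ is the unconditional alignment statistic, so $\Prb(\text{agree})-\Prb(\text{disagree})=A$. The regime boundary $u=2/3$ is equivalent to $1+A=6t$, i.e.\ $A=1/5$, i.e.\ $\rho=1/6$. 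This matches the shared regime boundary already reported.

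In the low regime ($\rho\le1/6$, $x^*=1$) both agents play the agreed label after agreement, so discovery equals $u$ on that event. After disagreement the posterior is $1/2$ and each agent mixes $1/2$ independently, so discovery is $3/4$. The selected discovery is therefore
\[
 D_{\mathrm{low}}=\tfrac{1+A}{2}\,u+\tfrac{1-A}{2}\cdot\tfrac34
 =\tfrac{1+A}{4}+\tfrac{t}{2}+\tfrac{3(1-A)}{8},
\]
using $\tfrac{1+A}{2}u=\tfrac{1+A}{4}+\tfrac t2$. Substituting $t$ and $A$ gives $D_{\mathrm{low}}=(18-3\rho)/25$, and the gap $1-D_{\mathrm{low}}=(7+3\rho)/25$ follows directly.

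In the high regime ($\rho>1/6$, $x=3u-1$) the disagreement contribution is unchanged. On agreement, the two agents independently pick the agreed label with probability $x$, so conditional discovery is
\[
 u\bigl(1-(1-x)^2\bigr)+(1-u)\bigl(1-x^2\bigr).
\]
I would substitute $x=3u-1$ and $u=1/2+t/(1+A)$, then express everything in $\rho$ via $1+A=(26+24\rho)/25$. This makes $u=\tfrac12+\tfrac{5}{26+24\rho}$ a rational function with denominator $13+12\rho$. Collecting terms over the common denominator should give $1-D_{\mathrm{high}}=(4+3\rho)/(13+12\rho)$. This rational simplification is the main obstacle. It is only bookkeeping, but errors are easy, so I would cross-check it in two ways. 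One check is continuity at $\rho=1/6$, where both formulas must give $15/50=3/10$. The other is agreement with the displayed registered values of the shared curve, for example shared discovery $0.72$ at $\rho=1$, i.e.\ a gap of $7/25$.

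Strict positivity is then immediate. On $[0,1]$, the numerator $7+3\rho$ is at least $7$, and both $4+3\rho$ and $13+12\rho$ are positive. So the centralized benchmark strictly dominates the selected shared equilibrium for every $\rho$, which proves the theorem.
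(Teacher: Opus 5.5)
Your proposal is correct and matches the route the paper implies: it gives no separate proof beyond the selected-equilibrium formulas and the registered cells, so you compute selected shared discovery in each regime from $u$ and $x^*$ and subtract it from $V_2=1$, and your low-regime value $(18-3\rho)/25$ is right. The high-regime step you flagged does close: with $x=3u-1$ the agreement-conditional discovery collapses to $3u(1-u)$, which gives $D_{\mathrm{high}}=9(1+\rho)/(13+12\rho)$ and hence the stated gap $(4+3\rho)/(13+12\rho)$, consistent with the registered cells $45/64,\,27/38,\,57/80,\,63/88,\,18/25$.
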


Centralized top-two is an authority benchmark: it assigns distinct targets. It
is not an equilibrium generated by information sharing alone. Opposite constant
private equilibria also reach one, but for a different reason and without using
the clues. Reporting only the common value would hide the difference between
binding action assignment and decentralized strategic coordination.

Communication can also be strategic. Cheap-talk models show that what is sent
depends on preferences and incentives \citep{CrawfordSobel1982}; jury and voting
models show that information aggregation depends on strategic behavior
\citep{AustenSmithBanks1996,LevyRazin2015}. The current model holds truthful clue
sharing fixed and studies the action game that follows. It therefore isolates
one implementation boundary but does not solve strategic disclosure.

\Boundary{The positive sharing theorem, the selection failure, and the planner
gap are jointly necessary for interpretation. Dropping the selection qualifier
would change the theorem. Dropping the planner label would conflate information
with authority.}

\section{Design implications and limitations}

The results support a diagnostic sequence rather than a universal prescription.
First, specify the discovery objective: top-one accuracy, coverage by a portfolio,
or expected payoff can rank architectures differently. Second, measure signal
geometry through posterior coverage or residual error, not only marginal
accuracy. Third, state how many independent actions survive pooling. Fourth,
identify who can assign actions and which equilibrium or behavioral rule is being
compared. Finally, preserve null and negative findings within their registered
scope.

For centralized teams, the action-budget profile is the natural planning object.
If $V_1<P_N$ but $V_2\geq P_N$, the informational case for sharing may be sound
while a one-action implementation is not. The design response could be to reserve
a second action, maintain a private red team, or make the pooled output a
shortlist rather than a single recommendation. The theorem does not price these
options; it reveals where such pricing enters.

For decentralized systems, role design can matter as much as disclosure. Agent
identifiers, clue ownership, randomized conventions, and assignment authority
change the implementable strategy space. Information sharing can improve one
symmetric selection while an asymmetric or ownership-aware equilibrium performs
better. A deployment claim should therefore name its selection mechanism rather
than report ``the equilibrium.''

Social-learning models explain how repeated observation can create herding,
cascades, and overweighting of common sources
\citep{Banerjee1992,BikhchandaniEtAl1992,DeMarzoEtAl2003}. The current finite
models share the concern that correlation and duplication are easy to misread,
but they do not estimate a dynamic learning process. Public information can also
have coordination benefits or costs in macroeconomic games
\citep{MorrisShin2002,CornandHeinemann2008}; our discovery objective and rescue
portfolio produce a different, explicitly action-budgeted frontier.

Information design asks how a designer should choose signals or recommendations
\citep{KamenicaGentzkow2011,BergemannMorris2016}. We mostly take the registered
channels as given. The frontier could become a design constraint---choose a
pooled signal whose residual error contracts beyond the rescue threshold---but
optimizing over all feasible experiments is outside the verified claims. Team
theory likewise asks how information should be allocated under decentralized
decision rules \citep{Radner1962}. Our contribution is a small exact bridge from
pooled coverage to discovery portfolios and selection boundaries.

Several limitations are structural. Rescue actions are independent in the
nonstrategic identity; correlated rescue would require their joint failure
probability. The finite channel registries do not span arbitrary experiments.
The strategic model is binary, two-agent, and uses a specific split-prize payoff.
Sharing is truthful and costless. No agent can acquire additional information,
and no communication network is modeled. The planner values discovery only and
can bind actions. The strategic positive interval fixes $p=3/5$ and two named
equilibrium selections. No welfare claim is made beyond the registered discovery
and payoff objects.

The empirical boundary is equally important. No participants were recruited,
no human or real organizational data were collected, and no experiment was run.
The displayed counts are exact bounded enumerations of synthetic model inputs.
They should motivate experimental contrasts, not be described as observations.

% Generated evidence/authority asset; do not edit by hand.
\begin{figure}[t]\centering\small
\begin{tabularx}{\textwidth}{Ylll}\toprule layer & evidence & authority & boundary\\\midrule
Signal geometry & exact bounded & centralized $V_L$ & scalar insufficiency\\
Incremental sharing & identity/theorem & nonstrategic protocol & arbitrary-channel failure\\
General frontier & theorem plus census & centralized recovery & mixed-curve bounded null\\
Strategic sharing & selected BNE theorem & autonomous actions & not every equilibrium\\
Selection failure & verified negative & broader strategy space & discovery-one alternatives\\
\bottomrule\end{tabularx}
\caption{Evidence and authority map. Analytic, bounded exact, centralized, selected decentralized, and negative-boundary claims remain distinct even when they form one theorem family.}
\label{fig:evidence-map}
\ArtifactNote{Runs: \path{20260722T084145Z_DD-019_a77bb786_04a5e9f0c5, 20260722T142551Z_DD-020_3854fff6_37c11a850a, 20260722T185924Z_DD-021_3cdbbc40_2fea269a9a, 20260722T210334Z_DD-022_2376d5b7_ad67765ca8}; claims: DD-C-0089--DD-C-0110; generator: \path{distributed_discovery.papers.build_information_sharing_frontier}; input SHA-256 prefixes: \texttt{a7da8a20aa24, 607d03fba773, a4f63d52f402, 0ad5c3249e7d, 3521286a3415, 2bd510abf8cd, e897c62593b4, 98fb7d475365, 8ae5af15a2f7, 146dcfba3f74}.}
\end{figure}

% Generated evidence table; do not edit by hand.
\begin{table}[t]\centering\scriptsize
\begin{tabularx}{\textwidth}{Ylll}\toprule claim range & owner & evidence class & manuscript role\\\midrule
DD-C-0089--0091 & DD-019 & independent repository method & geometry and recovery\\
DD-C-0092--0094 & DD-020 & identity and verified theorems & aggregation versus rescue\\
DD-C-0095--0096 & DD-020 & independent repository method & census and counterchannel\\
DD-C-0097--0098 & DD-021 & verified theorems & frontier and centralized recovery\\
DD-C-0099--0102 & DD-021 & independent repository method & classes, witnesses, budgets\\
DD-C-0103 & DD-021 & verified bounded negative & mixed-curve null\\
DD-C-0104--0107 & DD-022 & verified theorem/corollary & selected equilibria and interval\\
DD-C-0108 & DD-022 & independent repository method & 42-cell classification\\
DD-C-0109 & DD-022 & verified negative & selection failure\\
DD-C-0110 & DD-022 & verified theorem & implementation gap\\
\bottomrule\end{tabularx}
\caption{Claim, evidence, and ownership map. Here independently implemented repository verification is not external replication. Status belongs to each claim, not to the paper as a whole.}
\label{tab:claim-map}
\ArtifactNote{Runs: \path{20260722T084145Z_DD-019_a77bb786_04a5e9f0c5, 20260722T142551Z_DD-020_3854fff6_37c11a850a, 20260722T185924Z_DD-021_3cdbbc40_2fea269a9a, 20260722T210334Z_DD-022_2376d5b7_ad67765ca8}; claims: DD-C-0089--DD-C-0110; generator: \path{distributed_discovery.papers.build_information_sharing_frontier}; input SHA-256 prefixes: \texttt{a7da8a20aa24, 607d03fba773, a4f63d52f402, 0ad5c3249e7d, 3521286a3415, 2bd510abf8cd, e897c62593b4, 98fb7d475365, 8ae5af15a2f7, 146dcfba3f74}.}
\end{table}

The evidence architecture makes extension possible. A new channel family could
be registered and classified by its contraction ratios. A richer strategic
study could expand the equilibrium correspondence or introduce an explicit
selection institution. An empirical study could preregister sharing and action-
budget treatments. None of those future directions is silently included here.

\section{Conclusion}

Information sharing improves decentralized discovery when its aggregation gain
is large enough to replace the independent rescue capacity it removes and when
the relevant decentralized behavior actually implements that gain. The first
condition is captured exactly by the residual-error frontier
$e_{s+1}/e_s<1-q$. The second requires an equilibrium or institutional account.

Finite channel examples show why accuracy alone cannot answer the question.
Equal one-person accuracy and equal private discovery can coexist with sharply
different pooled profiles, recovery budgets, and sharing directions. Full
centralized action capacity always recovers the private baseline, but the
required budget is part of the design, not a free consequence of communication.

The strategic model supplies a strict positive interval without planner action
assignment, but it simultaneously supplies the boundary: the result belongs to
named equilibrium selections. Alternative private and ownership-aware strategies
and the centralized top-two benchmark prevent an every-equilibrium or pure
information interpretation. The useful conclusion is therefore conditional and
operational. Share when pooled residual error clears the rescue threshold, retain
enough action capacity, and state how diverse actions will be selected.

\appendix
\section{Proof details for the sharing identities}

This appendix records algebra used by the synthesis; the scientific ownership
remains with DD-020 and DD-021. Let $F_s$ denote failure of the pooled block and
$R_i$ failure of private rescue action $i$. Under the registered protocol,
$\Prb(F_s)=1-C_s$ and the $N-s$ rescue failures are independent with probability
$1-q$. Therefore
\[
 \Prb(F_s\cap R_1\cap\cdots\cap R_{N-s})=(1-C_s)(1-q)^{N-s}.
\]
Taking one minus this probability proves \cref{eq:incremental-value}. At the next
step, one fewer rescue action remains. Subtraction gives
\begin{align*}
G_{s+1}-G_s
&=(1-C_s)(1-q)^{N-s}-(1-C_{s+1})(1-q)^{N-s-1}\\
&=(1-q)^{N-s-1}\left[(1-q)e_s-e_{s+1}\right].
\end{align*}
The factor outside the brackets is nonnegative and is positive away from the
degenerate $q=1$ boundary. If $e_s>0$, division by $e_s$ produces the contraction
ratio. Notice that $C_{s+1}\geq C_s$ alone is not enough. The absolute gain must
be large enough relative to $q e_s$, the rescue value applied to the current
pooled miss probability.

Two boundary cases deserve explicit labels. If $e_s=0$, the current pooled block
already discovers surely; the ratio is undefined but the undivided identity
remains valid. If $q=1$, any remaining rescue action guarantees discovery, so
absorbing the last perfect rescuer cannot improve the portfolio. These cases do
not affect the registered theorem rows but prevent careless division.

\section{Canonical exact records}

For auditability, the principal frozen profiles are restated here. DD-019 records
\begin{align*}
\text{noisy point: }&(7/12,43/54,25/27),&P_3&=7/8,&L^*&=3,\\
\text{noisy shortlist: }&(35/64,79/96,539/576),&P_3&=387/512,&L^*&=2,\\
\text{guaranteed shortlist: }&(17/18,1,1),&P_3&=7/8,&L^*&=1,\\
\text{exclusion: }&(16/27,26/27,1),&P_3&=19/27,&L^*&=2,\\
\text{confidence point: }&(295/384,175/192,31/32),&P_3&=485/512,&L^*&=3.
\end{align*}
The associated DD-020 paths are respectively
\begin{align*}
&(7/8,3/4,7/12),\quad (387/512,11/16,35/64),\\
&(7/8,11/12,17/18),\quad (19/27,17/27,16/27),\\
&(485/512,113/128,295/384).
\end{align*}
These are exact source values, not numbers reconstructed from the plotted lines.

At $p=3/5$, DD-022 records the selected discovery comparisons at
$\rho\in\{0,1/6,1/4,1/2,7/12,3/4,1\}$. The private selected sequence is
$21/25,4/5,39/50,18/25,7/10,27/38,18/25$; the shared selected sequence is
$18/25,7/10,45/64,27/38,57/80,63/88,18/25$. The source CSV distributed with
\cref{fig:strategic} includes these cells and the centralized value one.

\section{Protocol-level derivations and interpretation checks}

This appendix makes explicit the sequence of objects that can otherwise be
compressed into the phrase ``sharing improves discovery.'' It introduces no new
claim; it is an editorial derivation and interpretation audit of DD-019 through
DD-022.

\subsection*{From a channel to an action-budget profile}

Fix a prior and a finite signal history $h$. Bayes' rule produces posterior
masses $\pi_j(h)$ over candidates. Sort them as
$\pi_{(1)}(h)\geq\cdots\geq\pi_{(M)}(h)$. Conditional on $h$, a centralized
planner with $L$ distinct actions covers the correct candidate with probability
$\sum_{j=1}^L\pi_{(j)}(h)$. Averaging over histories produces $V_L$. This
calculation uses the complete posterior vector. Keeping only
$\max_j\pi_j(h)$ would determine $V_1$ but discard the marginal value of actions
two through $M$.

That lost information explains the equal-accuracy example. Accuracy constrains
the expected success of a registered top action. It does not determine whether
the posterior error is concentrated on one runner-up or spread over several
candidates. A guaranteed shortlist can put essentially all mass into a known
small set even when its top-label correctness is only one half. A noisy point can
leave substantial mass outside the first two ranks. Because discovery is a
coverage objective, the ordering of the remaining masses matters directly.

The profile is centralized by construction. A decentralized team could attain
the same set of actions if it had an implementation device: binding assignments,
public roles, or a supported equilibrium that maps agents to distinct ranks.
Absent such a device, $V_L$ remains a benchmark. This label prevents a common
category error in which a planner's ordered posterior sum is reported as the
automatic consequence of agents observing the same evidence.

\subsection*{From a profile to an incremental-sharing path}

The registered incremental protocol chooses a block size $s$. Its pooled block
has a specified coverage $C_s$; the remaining agents act as independent rescue
attempts. The comparison between $s$ and $s+1$ therefore changes both the pooled
history and the number of rescue attempts. Holding the latter fixed would answer
a different question: the marginal value of an additional signal with no
capacity cost. Holding the pooled coverage fixed would answer the marginal value
of an independent attempt. The sharing frontier is the net comparison.

The failure representation is useful because discovery is a union event. The
current architecture fails only if the pooled block fails and every rescue
attempt fails. The next architecture may have a smaller pooled failure
probability, but it also contains one fewer multiplicative rescue factor. The
break-even equation
\[
 e_{s+1}=(1-q)e_s
\]
says that the pooled block must remove the same share of its residual error as a
private attempt would have removed. A strict inequality determines the sign
without requiring the absolute level of $G_s$.

The ratio formulation is scale-free but not assumption-free. If rescue attempts
are correlated, $(1-q)^{N-s}$ must be replaced by their joint conditional failure
probability. If rescue quality depends on the pooled history, the common $q$
must be indexed by that history. If pooled agents retain individual actions, the
capacity transition is no longer one-for-one. Each extension is feasible, but
none is silently covered by DD-C-0097.

\subsection*{From a theorem to a bounded classification}

The analytic frontier gives a predicate for each applicable step. DD-021 then
applies that predicate to a versioned registry. A curve is called aggregation
dominated when each strict step lies below the rescue threshold, compression
dominated when each strict step lies above it, all-neutral when equality holds
throughout, and mixed when different steps have different strict signs. The
class labels are summaries of exact records, not primitives in the theorem.

The distinction between proof and census is important. A test that recomputes
all 177 rows can detect implementation drift and show that the theorem agrees
with those fixtures. It cannot prove the formula for unenumerated parameters.
Conversely, the algebraic proof does not imply that the registry contains a
mixed witness. Keeping the analytic and bounded layers separate is what allows
the zero mixed count to remain a useful negative result without becoming an
unsupported universal monotonicity claim.

Minimal witnesses are also registry-relative. The word ``minimal'' refers to
the frozen lexicographic ordering over registered parameters. It does not mean
minimal among every mathematically possible channel or in description length.
This qualifier is carried into \cref{tab:witnesses} and the generated asset note.

\subsection*{From private clues to selected equilibria}

The DD-022 comparison begins with a hidden mixture over source dependence. The
parameter $\rho$ is the prior probability of the common-source branch. Observing
two agreeing clues raises confidence in their label and changes beliefs about
which branch was likely, but it does not reveal the branch realization. A
strategy that conditions on ``the clues were common'' would therefore use
information unavailable to the agents.

Before sharing, each agent sees one clue and not the other's. Anonymous
label-equivariance requires the same response after relabeling targets. The
selected symmetric equilibrium can then be summarized by a clue-following
probability $r$. The statistic $A=t^2+\rho(1-t^2)$ captures the unconditional
alignment created by clue accuracy and source dependence. At $A=3t$, the
boundary solution gives way to interior anti-crowding. The exact piecewise
formula in \cref{tab:equilibrium-formulas} is the registered selection.

After sharing, the information histories are agreement on a label or
disagreement. At agreement, $u$ is the posterior probability that the agreed
label is correct. Identical posterior-only mixing is summarized by $x$. At
disagreement the posterior over targets is one half, so the selected identical
mix is one half. This last step is where ownership information is deliberately
discarded: although the common posterior is symmetric, agent one and agent two
can still know which private clue each originally owned.

The positive interval compares the discovery induced by these two selections.
It is meaningful because both are exact Bayesian Nash equilibria under their
registered restrictions. It is limited because equilibrium existence does not
select them from all alternatives. An opposite constant-target profile ignores
the signals but covers the binary state space. An ownership-aware rule can use
agent identity to split actions after disagreement. These constructions do not
invalidate the selected interval; they invalidate a stronger interpretation of
it.

\subsection*{Three meanings of discovery one}

The value one appears in three places that must not be collapsed. First, a
centralized planner with two actions covers both binary targets by binding
assignment. Second, opposite constant-target private equilibria cover both
targets through a fixed decentralized convention. Third, an ownership-aware
shared rule can cover both targets on disagreement histories while behaving
differently on agreement. The same discovery number masks different evidence
use, authority, and off-path behavior.

For organizational design, this means that performance tables need an authority
column. A benchmark labeled ``centralized top-two'' is evidence about attainable
coverage under control, not about spontaneous coordination. A row labeled
``selected shared equilibrium'' is evidence about one decentralized behavioral
rule. A row labeled ``alternative equilibrium'' is evidence about multiplicity.
The generated selection map and evidence-authority map enforce those labels
visually.

\subsection*{Reading the canonical plot}

In \cref{fig:strategic}, the dashed direct-private curve is not the private
selected equilibrium after the private anti-crowding boundary. The selected
private curve initially follows direct clue use, then changes when mixing becomes
optimal. The selected shared curve changes formula at its own boundary $1/6$.
Its intersection with the private selected curve at $\rho^*$ is algebraic, not a
linear interpolation of plotted cells. The second equality at $\rho=1$ reflects
the fact that perfectly common clues contain no additional cross-agent evidence
when shared.

The connected points in the figure are a display of registered exact cells. The
theorem between them comes from piecewise symbolic formulas and a certified root
isolation. The decimal marker is only a visual aid. The exact expression and
rational bracket control every sign claim.

\subsection*{Interpretation checklist}

For each proposed use of the results, the following questions identify the
relevant row of the evidence map:
\begin{enumerate}[leftmargin=2em]
\item Is the outcome top-choice accuracy or coverage by at least one action?
\item Does sharing reduce the number of independent actions, and by how much?
\item Are remaining rescue attempts independent conditional on pooled failure?
\item Is the displayed action budget feasible and centrally assignable?
\item If actions are decentralized, which equilibrium selection or role system
      maps shared evidence to distinct actions?
\item Is a reported count an analytic theorem, a bounded exact census, or a
      preserved null inside a registered search space?
\item Does the protocol reveal only the pooled clues, or also an unavailable
      latent source branch?
\item Are claims about model-generated values kept separate from claims about
      people, organizations, or deployed multi-agent systems?
\end{enumerate}
An application that cannot answer these questions has not yet specified the
comparison made by this paper.

\section{Evidence, provenance, and reproducibility}

The generator resolves all 22 claim IDs against the claim ledger, verifies the
ten source-output hashes against immutable run manifests, writes exact CSV data
for every figure, and writes TeX tables and figures with run IDs, claim IDs, the
generator name, and input checksum prefixes. It then builds the PDF twice under
a fixed source epoch and requires byte-identical hashes. An independent paper
audit rechecks the source contracts without importing the generator and performs
deliberate corruption tests on copied metadata.

Here ``independent'' means independently implemented repository verification:
the second method or audit does not import or call the primary generator path.
It is not external replication by an unaffiliated team. The complete immutable
records, independent verifier outputs, deliberate corruption checks, and paper
generator are public at repository commit
\nolinkurl{66ba4449572b20c519a4629ef20cfc030fae1762}. The four run directories and
their complete manifest SHA-256 digests are:
\begin{itemize}[leftmargin=1.4em]
\item \url{https://github.com/yoheinakajima/distributed-discovery/tree/66ba4449572b20c519a4629ef20cfc030fae1762/results/verified/20260722T084145Z_DD-019_a77bb786_04a5e9f0c5},
  \nolinkurl{011c7699e6bf2bb90b1201797a430b88b74228385e74df61d23d03a124670c53};
\item \url{https://github.com/yoheinakajima/distributed-discovery/tree/66ba4449572b20c519a4629ef20cfc030fae1762/results/verified/20260722T142551Z_DD-020_3854fff6_37c11a850a},
  \nolinkurl{82f8c71d65b27a48b075d06d0a186040cc9e6015aaee9bc695ff2e9fa823a972};
\item \url{https://github.com/yoheinakajima/distributed-discovery/tree/66ba4449572b20c519a4629ef20cfc030fae1762/results/verified/20260722T185924Z_DD-021_3cdbbc40_2fea269a9a},
  \nolinkurl{69ac4c760d7a34b6ee315ae619287b7928f69cc8a20eaf4a62202868baadcacd};
\item \url{https://github.com/yoheinakajima/distributed-discovery/tree/66ba4449572b20c519a4629ef20cfc030fae1762/results/verified/20260722T210334Z_DD-022_2376d5b7_ad67765ca8},
  \nolinkurl{3d905723eda6069cf870f1fb7b2a241cfa592ea43e9009473bd2b0878bdd5bb1}.
\end{itemize}
The principal review-facing outputs are DD-020
\path{outputs/channel-profiles.json} at
\nolinkurl{a4f63d52f402ce14f91e469d4b425ce65a92740b31530c7da9384cc726030e3a},
DD-021 \path{outputs/registry.json} at
\nolinkurl{3521286a341595d27b95248a0720f4df672bb1b811d17edf7f286b5520912697},
and DD-022 \path{outputs/registry.json} at
\nolinkurl{98fb7d475365548fcc723603b3c3721e673b0b0423bc89579c411c3252ac54f1}.
Their run directories also contain the full verification and corruption-test
records. The paper generator is
\url{https://github.com/yoheinakajima/distributed-discovery/blob/66ba4449572b20c519a4629ef20cfc030fae1762/src/distributed_discovery/papers/build_information_sharing_frontier.py};
the arXiv source package includes compact copies of the cited immutable evidence.

The manuscript makes no new claim record and no new study or run. DD-019 owns
the signal-geometry results, DD-020 the incremental identity and counterchannel,
DD-021 the general frontier and recovery registry, and DD-022 the strategic
equilibrium results. Editorial synthesis does not change those ownership lines.
The citation database is copied into the generated bundle at build time so the
compiled bibliography is pinned to the paper artifact.

\section{Literature boundary}

The adjacent literatures establish why narrow contribution language is needed.
Blackwell comparison concerns informativeness across decision problems
\citep{Blackwell1953}. Team decision theory concerns decentralized information
allocation \citep{Radner1962}. Herding and social-learning models concern the
endogenous informational content of observed behavior
\citep{Banerjee1992,BikhchandaniEtAl1992,AcemogluEtAl2011}. Strategic
communication concerns incentive-compatible messages \citep{CrawfordSobel1982}.
Information design concerns signal or recommendation choice
\citep{KamenicaGentzkow2011,BergemannMorris2016}. Equilibrium selection concerns
which equilibrium is predicted when several exist \citep{HarsanyiSelten1988}.

The present synthesis composes a particular portfolio objective, rescue
technology, coverage profile, and selected action game. That composition may be
useful without supporting a claim that no related frontier, anti-crowding
mechanism, or selection caveat exists elsewhere. The accompanying literature
files record stable identifiers and the claim-by-claim relationship.

\bibliographystyle{plainnat}
\bibliography{generated/references}
\end{document}